\DeclarePairedDelimiter\floor{\lfloor}{\rfloor}
\def\D{{\mathcal D}}
\def\reals{{\mathbb R}}
\def\norm#1{\mathopen\| #1 \mathclose\|}
\newcommand{\ignore}[1]{}
\newcommand{\equaldef}{\stackrel{\text{\tiny def}}{=}}
\def\reals{{\mathbb R}}
\def\bold0{\mathbf{0}}
\newcommand{\K}{\ensuremath{\mathcal K}}
\def\eps{\varepsilon}
\def\epsilon{\varepsilon}
\newcommand{\braces}[1]{\left\{#1\right\}}
\newcommand{\pa}[1]{\left(#1\right)}
\newcommand{\ang}[1]{\left<#1\right>}
\newcommand{\bra}[1]{\left[#1\right]}
\DeclareMathOperator*{\E}{\mathbb{E}}
\newcommand{\R}{\mathbb{R}}
\newtheorem{theorem}{Theorem}[section]
\newtheorem{corollary}[theorem]{Corollary}
\newtheorem{definition}[theorem]{Definition}
\newtheorem{lemma}[theorem]{Lemma}
\newtheorem{proposition}[theorem]{Proposition}
\newcommand{\punt}[1]{}
\renewcommand{\K}{\mathcal{K}}
\newcommand{\Acal}{\mathcal{A}}
\newcommand{\Lmin}{\lambda_{\mathrm{min}}}
\newcommand{\regret}{\mathfrak{R}}
\newtheorem{assumption}[theorem]{Assumption}
\title{Efficient Regret Minimization in Non-Convex Games}
\author{Elad Hazan\footnote{ehazan@cs.princeton.edu, Department of Computer Science, Princeton University} \and Karan Singh\footnote{karans@cs.princeton.edu, Department of Computer Science, Princeton University} \and Cyril Zhang\footnote{cyril.zhang@princeton.edu, Department of Computer Science, Princeton University}}
\begin{document} 

\maketitle

\begin{abstract}
We consider regret minimization in repeated games with non-convex loss functions. Minimizing the standard notion of regret is computationally intractable. Thus, we define a natural notion of regret which permits efficient optimization and generalizes offline guarantees for convergence to an approximate local optimum. We give gradient-based methods that achieve optimal regret, which in turn guarantee convergence to equilibrium in this framework.
\end{abstract}

\section{Introduction}

Repeated games with non-convex utility functions serve to model many natural settings, such as multiplayer games with risk-averse players and adversarial (e.g. GAN) training. However, standard regret minimization and equilibria computation with general non-convex losses are computationally hard. 
This paper studies \emph{computationally tractable} notions of regret minimization and equilibria in non-convex repeated games. 

Regret minimization in games typically amounts to repeated play in which the decision maker accumulates an average loss proportional to that of the best fixed decision in hindsight. This is a {\it global} notion with respect to the decision set of the player. If the loss functions are convex (or, as often considered, linear) restricted to the actions of the other players, then this notion of global optimization is computationally tractable. It can be shown that under certain conditions, players that minimize regret converge in various notions to standard notions of equilibrium, such as Nash equilibrium, correlated equilibrium, and coarse correlated equilibrium. This convergence crucially relies on the global optimality guaranteed by regret.

In contrast, it is NP-hard to compute the global minimum of a non-convex function over a convex domain. Rather, efficient non-convex continuous optimization algorithms focus on finding a {\it local} minimum. We thus consider notions of equilibrium that can be obtained from local optimality conditions of the players with respect to each-others' strategies.  
This requires a different notion of regret whose minimization guarantees convergence to a local minimum.

The rest of the paper is organized as follows.
After briefly discussing why standard regret is not a suitable metric of performance, we introduce and motivate \textit{local regret}, a surrogate for regret to the non-convex world. We then proceed to give efficient algorithms for non-convex online learning with optimal guarantees for this new objective. In analogy with the convex setting, we discuss the way our framework captures the offline and stochastic cases. In the final section, we describe a game-theoretic solution concept which is intuitively appealing, and, in contrast to other equilibria, efficiently attainable in the non-convex setting by simple algorithms.

\subsection{Related work}

The field of online learning is by now rich with a diverse set of algorithms for extremely general scenarios, see e.g. \citep{CesaBianchiLugosi06book}. For bounded cost functions over a bounded domain, it is well known that versions of the multiplicative weights method gives near-optimal regret bounds \citep{cover,Vovk:1990,AHK-MW}. 

Despite the tremendous generality in terms of prediction, the multiplicative weights method in its various forms yields only exponential-time algorithms for these general scenarios. This is inevitable, since regret minimization implies optimization, and general non-convex optimization is NP-hard. Convex forms of regret minimization have dominated the learning literature in recent years due to the fact that they allow for efficient optimization, see e.g. \cite{OCObook,shalev2011online}. 

Non-convex mathematical optimization algorithms typically find a local optimum. For smooth optimization, gradient-based methods are known to find a point with gradient of squared norm at most $\varepsilon$ in $O(\frac{1}{\varepsilon})$ iterations \citep{NesterovBook}.\footnote{We note here that we measure the squared norm of the gradient, since it is more compatible with convex optimization. The mathematical optimization literature sometimes measures the norm of the gradient without squaring it.} A rate of $O(\frac{1}{\varepsilon^2})$ is known for stochastic gradient descent \citep{ghadimi-lan}. Further accelerations in terms of the dimension are possible via adaptive regularization \citep{adagrad}.

Recently, stochastic second-order methods have been considered, which enable even better guarantees for non-convex optimization: not only is the gradient at the point returned small, but the Hessian is also guaranteed to be close to positive semidefinite (i.e. the objective function is locally almost-convex), see e.g. \cite{newsamp,CarmonAGD,Lissa2,LiSSA2016}.

The relationship between regret minimization and learning in games has been considered in both the machine learning literature, starting with \cite{FreundSch1997}, and the game theory literature by \cite{hart2000simple}. Motivated by \cite{hart2000simple}, \cite{blumMansour} study reductions from internal to external regret, and \cite{NIPS2007_695} relate the computational efficiency of these reductions to fixed point computations.

\section{Setting}

We begin by introducing the setting of online non-convex optimization, which is modeled as a game between a learner and an adversary. During each iteration $t$, the learner is tasked with predicting $x_t$ from $\K \subseteq \reals^n$, a convex decision set. Concurrently, the adversary chooses a loss function $f_t: \K\to\reals$; the learner then observes $f_t(x)$ (via access to a first-order oracle) and suffers a loss of $f_t(x_t)$. This procedure of play is repeated across $T$ rounds.

The performance of the learner is measured through its regret, which is defined as a function of the loss sequence $f_1, \ldots, f_T$ and the sequence of online decisions $x_1, \ldots, x_T$ made by the learner. We discuss our choice of regret measure at length in Section~\ref{subsection:regret-measure}.

Throughout this paper, we assume the following standard regularity conditions:
\begin{assumption}\label{assumption:smooth}
We assume the following is true for each loss function $f_t$:
\begin{enumerate}
\item[(i)] $f_t$ is bounded:
$|f_t(x)| \leq M.$
\item[(ii)] $f_t$ is $L$-Lipschitz:
$|f_t(x)-f_t(y)| \leq L\|x-y\|.$
\item[(iii)] $f_t$ is $\beta$-smooth (has a $\beta$-Lipschitz gradient):
\[\norm{\nabla f_t(x) - \nabla f_t(y)} \leq \beta \norm{x - y}.\]
\end{enumerate}
\end{assumption}

\subsection{Projected gradients and constrained non-convex optimization}

In constrained non-convex optimization, minimizing the gradient presents difficult computational challenges. In general, even when objective functions are smooth and bounded, local information may provide no information about the location of a stationary point. This motivates us to refine our search criteria.

Consider, for example, the function sketched in Figure~\ref{fig:needle}. In this construction, defined on the hypercube in $\R^n$, the unique point with a vanishing gradient is a hidden valley, and gradients outside this valley are all identical. Clearly, it is hopeless in an information-theoretic sense to find this point efficiently: the number of value or gradient evaluations of this function must be $\exp(\Omega(n))$ to discover the valley.

\begin{center}
\begin{figure}[h!]
\begin{center}
\caption{A difficult ``needle in a haystack'' case for constrained non-convex optimization. \emph{Left:} A function with a hidden valley, with small gradients shown in yellow. \emph{Right:} Regions with small \emph{projected} gradient for the same function. For smaller $\eta$, only points near the valley and bottom-left corner have small projected gradient.} \label{fig:needle}
\vspace{4mm}
\includegraphics[width=0.15\textwidth]{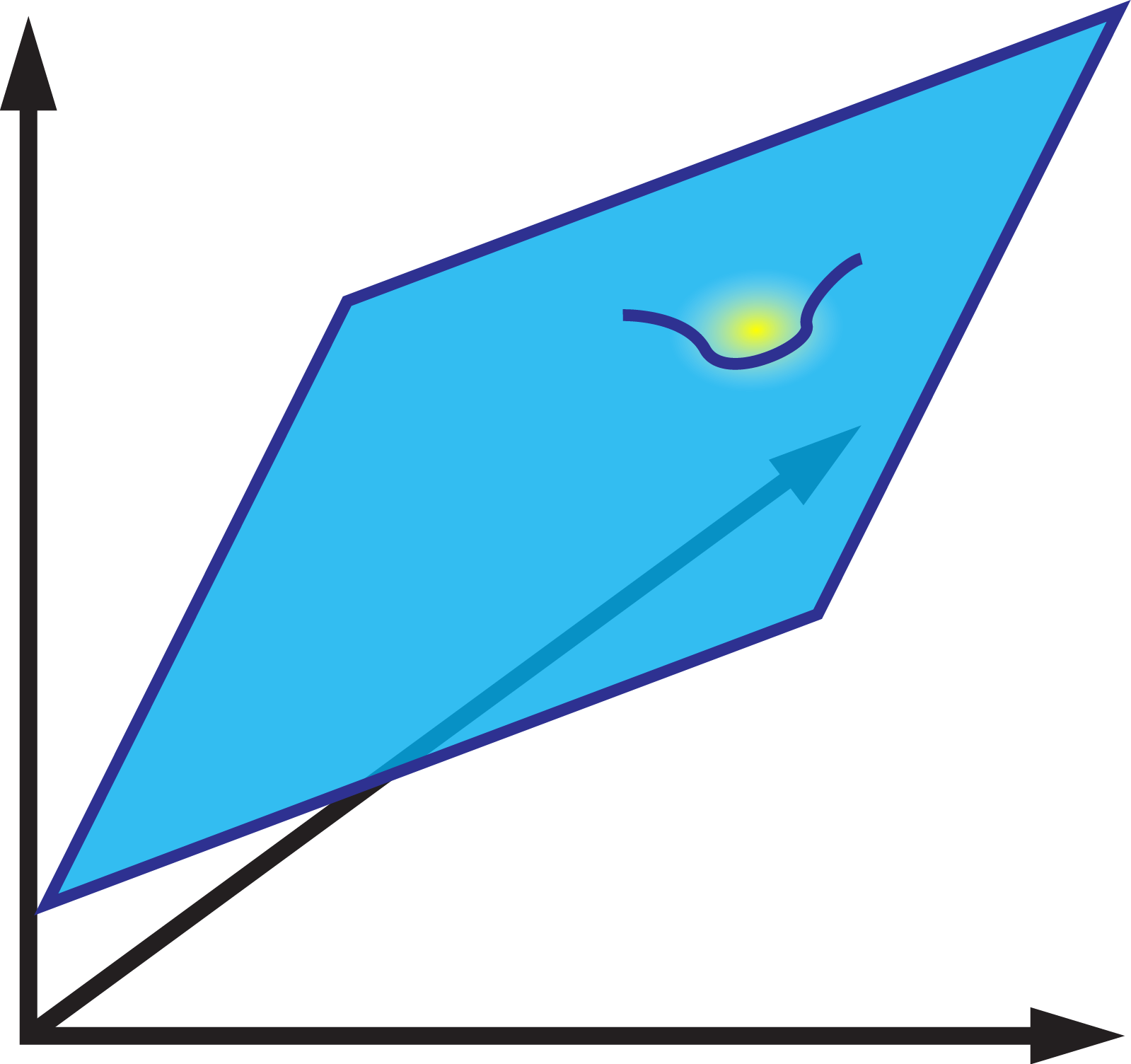}
\hspace{12mm}
\includegraphics[width=0.15\textwidth]{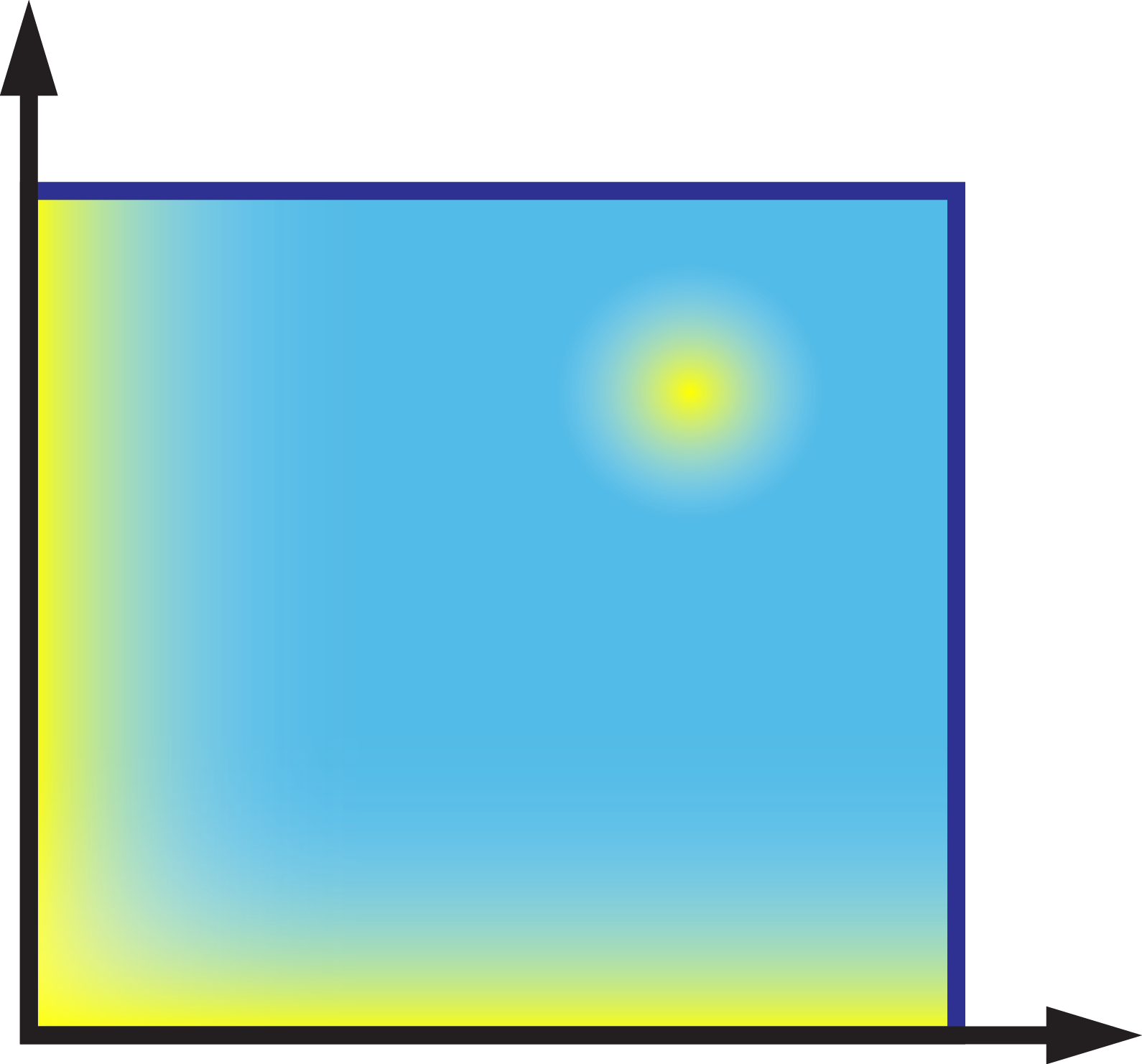}
\end{center}
\end{figure}
\end{center}

To circumvent such inherently difficult and degenerate cases, we relax our conditions, and try to find a vanishing \emph{projected gradient}. In this section, we introduce this notion formally, and motivate it as a natural quantity of interest to capture the search for local minima in constrained non-convex optimization.

\begin{definition}[Projected gradient]
Let $f : \K \rightarrow \R$ be a differentiable function on a closed (but not necessarily bounded) convex set $\K \subseteq \R^n$. Let $\eta > 0$.
We define $\nabla_{\K, \eta} f : \K \rightarrow \R^n$, the \emph{$(\K,\eta)$-projected gradient} of $f$, by
\begin{equation*}
\nabla_{\K, \eta} f(x) \equaldef
\frac{1}{\eta} \pa{ x - \Pi_\K\bra{ x - \eta \nabla f(x) } },
\end{equation*}
where $\Pi_\K[\cdot]$ denotes the orthogonal projection onto $\K$.
\end{definition}

This can be viewed as a surrogate for the gradient which ensures that the gradient descent step always lies within $\K$, by transforming it into a projected gradient descent step. Indeed, one can verify by definition that
\[x - \eta \nabla_{\K, \eta}(x) = \Pi_\K\bra{ x - \eta \nabla f(x) }.\]

In particular, when $\K = \reals^n$,
\[\nabla_{\K,\eta} f(x) = \frac{1}{\eta}(x - x + \eta \nabla f(x)) = \nabla f(x),\] 
and we retrieve the usual gradient at all $x$.

We first note that there always exists a point with vanishing projected gradient.

\begin{proposition}
\label{fixed-point}
Let $\K$ be a compact convex set, and suppose $f : \K \rightarrow \R$ satisfies Assumption~\ref{assumption:smooth}. Then, there exists some point $x^* \in \K$ for which
\begin{equation*}
\nabla_{\K,\eta} f(x^*) = 0.
\end{equation*}
\end{proposition}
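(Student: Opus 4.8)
The plan is to recast the vanishing-projected-gradient condition as a fixed-point equation and then invoke Brouwer's fixed-point theorem. Unwinding the definition, the identity $x - \eta \nabla_{\K,\eta} f(x) = \Pi_\K[x - \eta \nabla f(x)]$ noted in the text shows that $\nabla_{\K,\eta} f(x^*) = 0$ holds if and only if $x^* = \Pi_\K[x^* - \eta \nabla f(x^*)]$; that is, $x^*$ is a fixed point of the map $g : \K \to \R^n$ given by $g(x) \defeq \Pi_\K[x - \eta \nabla f(x)]$. It therefore suffices to exhibit a fixed point of $g$.

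First I would observe that $g$ is a self-map of $\K$: the orthogonal projection $\Pi_\K$ always returns a point of $\K$, so $g(x) \in \K$ for every $x \in \K$. Since $\K$ is assumed nonempty, compact, and convex, this casts $g$ as a self-map of a nonempty compact convex subset of $\R^n$, which is precisely the setting for a topological fixed-point argument.

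Next I would verify continuity of $g$. The projection $\Pi_\K$ onto a closed convex set is nonexpansive, hence continuous; the gradient $\nabla f$ is continuous because Assumption~\ref{assumption:smooth}(iii) makes it $\beta$-Lipschitz. Thus $x \mapsto x - \eta \nabla f(x)$ is continuous, and $g$, being a composition of continuous maps, is continuous. Applying Brouwer's fixed-point theorem to the continuous self-map $g$ on $\K$ then yields a point $x^* \in \K$ with $g(x^*) = x^*$, which by the equivalence above gives $\nabla_{\K,\eta} f(x^*) = 0$.

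No step here is computationally heavy; the only genuine content is assembling continuity of $g$ from the nonexpansiveness of $\Pi_\K$ and the Lipschitzness of $\nabla f$. The one point to state carefully is the applicability of Brouwer's theorem: a nonempty compact convex subset of $\R^n$ is homeomorphic to a closed ball of the appropriate dimension, which is exactly the hypothesis under which the theorem holds, so the conclusion transfers directly to $\K$.
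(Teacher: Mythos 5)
Your proof is correct and follows essentially the same route as the paper: both recast the condition $\nabla_{\K,\eta} f(x^*) = 0$ as a fixed point of the continuous self-map $g(x) = \Pi_\K[x - \eta \nabla f(x)]$ on the compact convex set $\K$ and apply Brouwer's theorem. You supply slightly more detail (nonexpansiveness of $\Pi_\K$, the homeomorphism to a ball) than the paper does, but the argument is identical in substance.
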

\begin{proof}
Consider the map $g : \K \rightarrow \K$, defined by
\begin{equation*}
g(x) \equaldef x - \eta \nabla_{\K, \eta} f(x) = \Pi_\K \bra{ x - \eta \nabla f(x) }.
\end{equation*}
This is a composition of continuous functions (noting that the smoothness assumption implies that $\nabla f$ is continuous), and is therefore continuous. Thus $g$ satisfies the conditions for Brouwer's fixed point theorem, implying that there exists some $x^* \in \K$ for which $g(x^*) = x^*$. At this point, the projected gradient vanishes.
\end{proof}

In the limit where $\eta \norm{\nabla f(x)}$ is infinitesimally small, the projected gradient is equal to the gradient in the interior of $\K$; on the boundary of $\K$, it is the gradient with its outward-facing component removed. This exactly captures the first-order condition for a local minimum.

The final property that we note here is that an approximate local minimum, as measured by a small projected gradient, is robust with respect to small perturbations.

\begin{proposition}
\label{projection-is-lipschitz}
Let $x$ be any point in $\K \subseteq \R^n$, and let $f,g$ be differentiable functions $\K \rightarrow \R$. Then, for any $\eta > 0$,
\[ \norm{\nabla_{\K,\eta} [f + g](x)} \leq \norm{\nabla_{\K,\eta} f(x)} + \norm{\nabla g(x)}. \]
\end{proposition}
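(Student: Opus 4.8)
The plan is to unwind both projected gradients through their definition, isolate the contribution of $g$ as a difference of two projections, and then exploit the fact that Euclidean projection onto a convex set is nonexpansive. Concretely, writing $a \equaldef x - \eta \nabla f(x)$ and observing that $\nabla(f+g)(x) = \nabla f(x) + \nabla g(x)$, the definition gives
\[
\nabla_{\K,\eta}[f+g](x) = \frac{1}{\eta}\pa{ x - \Pi_\K\bra{ a - \eta \nabla g(x) } }, \qquad \nabla_{\K,\eta} f(x) = \frac{1}{\eta}\pa{ x - \Pi_\K[a] }.
\]

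First I would apply the triangle inequality in the form $\norm{\nabla_{\K,\eta}[f+g](x)} \leq \norm{\nabla_{\K,\eta} f(x)} + \norm{\nabla_{\K,\eta}[f+g](x) - \nabla_{\K,\eta} f(x)}$, which reduces the claim to bounding the last term by $\norm{\nabla g(x)}$. Subtracting the two displayed expressions, the $x$-terms cancel and the difference becomes purely a difference of projections:
\[
\nabla_{\K,\eta}[f+g](x) - \nabla_{\K,\eta} f(x) = \frac{1}{\eta}\pa{ \Pi_\K[a] - \Pi_\K\bra{ a - \eta \nabla g(x) } }.
\]

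Next I would invoke the standard fact that orthogonal projection onto a closed convex set is $1$-Lipschitz (nonexpansive): for any $u, v \in \R^n$, $\norm{\Pi_\K[u] - \Pi_\K[v]} \leq \norm{u - v}$. Applying this with $u = a$ and $v = a - \eta \nabla g(x)$ yields $\norm{\Pi_\K[a] - \Pi_\K[a - \eta \nabla g(x)]} \leq \eta\norm{\nabla g(x)}$, so the difference term above is at most $\frac{1}{\eta}\cdot\eta\norm{\nabla g(x)} = \norm{\nabla g(x)}$. Combining with the triangle inequality closes the proof.

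There is no serious obstacle here: the entire argument rests on the nonexpansiveness of the projection, which is the one nontrivial ingredient and is a classical consequence of the first-order optimality (variational) characterization of the projection. The only point demanding mild care is the algebraic cancellation of the $x$ and $\nabla f(x)$ terms so that the residual isolates exactly the $g$-contribution as $\eta\nabla g(x)$ inside the projection; once that is set up, the factor $1/\eta$ cancels cleanly against the $\eta$ introduced by the Lipschitz bound, and the stated inequality follows immediately.
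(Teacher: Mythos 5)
Your proposal is correct and follows essentially the same route as the paper: both reduce the claim via the triangle inequality to bounding $\norm{\nabla_{\K,\eta}[f+g](x) - \nabla_{\K,\eta} f(x)}$, which is $\frac{1}{\eta}$ times a difference of projections, and then use nonexpansiveness of $\Pi_\K$. The only cosmetic difference is that the paper proves the nonexpansiveness inline from the generalized Pythagorean (variational) inequality, whereas you cite it as a standard fact.
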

\begin{proof}
Let $u = x + \eta \nabla f(x)$, and $v = u + \eta \nabla g(x)$. Define their respective projections $u' = \Pi_\K\bra{u}, v' = \Pi_\K\bra{v}$, so that $u' = x - \eta \nabla_{\K,\eta} f(x)$ and $v' = x - \eta \nabla_{\K,\eta} [f + g](x)$. We first show that $\norm{u' - v'} \leq \norm{u - v}$.

By the generalized Pythagorean theorem for convex sets, we have both $\ang{u' - v', v - v'} \leq 0$ and
$\ang{v' - u', u - u'} \leq 0$.
Summing these, we get
\begin{align*}
\ang{u' - v', u' - v' - (u - v)} &\leq 0 \\
\implies \norm{u' - v'}^2 &\leq \ang{u' - v', u - v}\\
&\leq \norm{u'-v'} \cdot \norm{u-v},
\end{align*}
as claimed.
Finally, by the triangle inequality, we have
\begin{align*}
\norm{\nabla_{\K,\eta} [f &+ g](x)} - \norm{\nabla_{\K,\eta} f(x)} \\
&\leq \norm{\nabla_{\K,\eta} [f + g](x) - \nabla_{\K,\eta} f(x)}\\
&= \frac{1}{\eta} \norm{u'-v'} \\
& \leq \frac{1}{\eta} \norm{u-v} = \norm{\nabla g(x)},
\end{align*}
as required.

\end{proof}
In particular, this fact immediately implies that $\norm{ \nabla_{\K, \eta} f(x) } \leq \norm{ \nabla f(x) }$.

As we demonstrate later, looking for a small projected gradient becomes a feasible task. In Figure~\ref{fig:needle} above, such a point exists on the boundary of $\K$, even when there is no ``hidden valley'' at all.

\subsection{A local regret measure}
\label{subsection:regret-measure}

In the well-established framework of online convex optimization, numerous algorithms can efficiently achieve optimal regret, in the sense of converging in terms of average loss towards the best fixed decision in hindsight. That is, for any $u \in \K$, one can play iterates $x_1, \ldots, x_T$ such that
\begin{equation*}
\frac{1}{T} \sum_{i=1}^T \bra{ f_t(x_t) - f_t(u) } = o(1).
\end{equation*}
Unfortunately, even in the \emph{offline} case, it is too ambitious to converge towards a global minimizer in hindsight. In the existing literature, it is usual to state convergence guarantees towards an $\eps$-approximate stationary point -- that is, there exists some iterate $x_t$ for which $\norm{\nabla f(x_t)}^2 \leq \eps$. As discussed in the previous section, the projected gradient is a natural analogue for the constrained case.

In light of the computational intractability of direct analogues of convex regret, we introduce \emph{local regret}, a new notion of regret which quantifies the objective of predicting points with small gradients on average. The remainder of this paper discusses the motivating roles of this quantity.

Throughout this paper, for convenience, we will use the following notation to denote the sliding-window time average of functions $f$, parametrized by some window size $1 \leq w \leq T$:
\begin{equation*}
F_{t,w}(x) \equaldef \frac{1}{w} \sum_{i=0}^{w-1} f_{t-i}(x).
\end{equation*}
For simplicity of notation, we define $f_t(x)$ to be identically zero for all $t \leq 0$. We define local regret below:

\begin{definition}[Local regret]
Fix some $\eta > 0$. Define the \emph{$w$-local regret} of an online algorithm as
\begin{equation*}
\regret_w(T) \equaldef \sum_{t=1}^T \; \norm{ \nabla_{\K, \eta} F_{t,w}(x_t) }^2,
\end{equation*}
\end{definition}

When the window size $w$ is understood by context, we omit the parameter, writing simply \emph{local regret} as well as $F_t(x)$.

We turn to the first motivating perspective on local regret. When an algorithm incurs local regret sublinear in $T$, a randomly selected iterate has a small time-averaged gradient in expectation:
\begin{proposition}
\label{regret-to-convergence}
Let $x_1, \ldots, x_T$ be the iterates produced by an algorithm for online non-convex optimization which incurs a local regret of $\regret_w(T)$. Then,
\begin{equation*}
\E_{t \sim \mathrm{Unif}([T])}\bra{ \norm{ \nabla_{\K, \eta} F_{t,w}(x_t) }^2 } \leq \frac{\regret_w(T)}{T}.
\end{equation*}
\end{proposition}
This generalizes typical convergence results for the gradient in offline non-convex optimization; we discuss concrete reductions in Section~\ref{reduction-section}.

\subsection{Why smoothing is necessary}
In this section, we show that for any online algorithm, an adversarial sequence of loss functions can force the local regret incurred to scale with $T$ as $\Omega\left(\frac{T}{w^2}\right)$. This demonstrates the need for a time-smoothed performance measure in our setting, and justifies our choice of larger values of the window size $w$ in the sections that follow.
\begin{theorem}
Define $\K=[-1,1]$. For any $T \geq 1$, $1 \leq w \leq T$, and $\eta \leq 1$, there exists a distribution $\mathcal{D}$ on $0$-smooth, $1$-bounded cost functions $f_1,\dots, f_T$ on $\K$ such that for any online algorithm, when run on this sequence of functions,
\[\E_\mathcal{D} \bra{ \regret_w(T) } \geq \frac{1}{4w} \left\lfloor \frac{T}{2w} \right\rfloor. \]
\end{theorem}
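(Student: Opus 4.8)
The plan is to exhibit a randomized sequence of linear (hence $0$-smooth) losses on $\K=[-1,1]$ for which, at a constant fraction of the rounds, the windowed loss $F_{t,w}$ has slope of magnitude $1/w$ whose \emph{sign} is a fresh fair coin, independent of everything the learner has observed. Since the learner must commit to $x_t$ before seeing $f_t$, it cannot hedge against both signs, so each such round contributes $\Omega(1/w^2)$ to the expected local regret; as there are $\Omega(T)$ such rounds this yields the claimed $\Omega(T/w^2)$ bound.

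Concretely, I would draw i.i.d. signs $\sigma_1,\sigma_2,\dots\in\{\pm 1\}$ and place the losses in cancelling pairs, $f_{2k-1}(x)=\sigma_k x$ and $f_{2k}(x)=-\sigma_k x$. Each $f_t$ is linear, $1$-bounded and $0$-smooth on $\K$, as required. For a linear loss of slope $c$ one has $\nabla_{\K,\eta}f(x)=\tfrac1\eta(x-\Pi_\K[x-\eta c])$, which equals $c$ in the interior but can be annihilated at the appropriate endpoint once $\sign(c)$ is known. The slope of the window average is $\tfrac1w\sum_{i=0}^{w-1}c_{t-i}$, where $c_s$ is the slope of $f_s$; having observed $f_1,\dots,f_{t-1}$, the learner knows every term except the current $c_t$.

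The structural heart of the construction is that at each \emph{odd} round $t=2k-1$ whose window is full (i.e. $t\ge w$), the $w-1$ previously revealed losses in the window split into complete cancelling pairs, so their slopes sum to exactly $0$, while the current slope $c_t=\sigma_k$ is independent of the learner's past (here I take $w$ odd so that $w-1$ is even and the pairing is exact; for even $w$ an analogous construction with an inserted zero to fix parity works at the same rate, absorbed into the constant). Thus the window slope is exactly $\sigma_k/w$ with $\sigma_k$ a fresh fair coin. The main step is then the per-round bound: for \emph{any} $x\in\K$ chosen independently of $\sigma_k$,
\[
\E_{\sigma_k}\norm{\nabla_{\K,\eta}F_{t,w}(x)}^2 = \tfrac12\Big(\min\big(\tfrac1w,\tfrac{1+x}{\eta}\big)^2+\min\big(\tfrac1w,\tfrac{1-x}{\eta}\big)^2\Big)\ge \frac{1}{2w^2},
\]
where the inequality is exactly where the hypothesis $\eta\le 1$ enters: since $(1+x)+(1-x)=2>2\eta/w$, the two clipped terms cannot both fall below $1/w$, so at least one of them equals $1/w$. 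I expect this inequality — showing that no single position can simultaneously kill the projected gradient for both signs — to be the crux of the argument.

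Finally I would sum $\tfrac{1}{2w^2}$ over the good odd rounds $t\in[w,T]$, of which there are at least $T/4$ when $T\ge 2w$ (the statement is vacuous otherwise, since then $\lfloor T/2w\rfloor=0$). Because $x_t$ is independent of the fresh sign $\sigma_k$, the per-round bound survives taking expectation over the past, giving $\E_{\mathcal D}[\regret_w(T)]\ge \tfrac{1}{2w^2}\cdot\tfrac{T}{4}=\tfrac{T}{8w^2}\ge \tfrac{1}{4w}\lfloor\tfrac{T}{2w}\rfloor$, the last step using $\lfloor\tfrac{T}{2w}\rfloor\le\tfrac{T}{2w}$. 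The only remaining subtleties are the window warm-up at the start (handled by restricting to $t\ge w$) and the even-$w$ parity fix, both routine and comfortably covered by the slack in the constant.
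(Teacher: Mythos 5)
Your proposal is correct and follows essentially the same route as the paper: i.i.d.\ sign losses $\pm x$ arranged in cancelling pairs so that the window average $F_{t,w}$ reduces to a fresh, unpredictable slope of magnitude $1/w$, followed by the per-round observation that no single $x\in[-1,1]$ can make the projected gradient small for both signs when $\eta\le 1$, giving $\E\bra{\norm{\nabla_{\K,\eta}F_{t,w}(x_t)}^2}\ge \frac{1}{2w^2}$ at $\Omega(T)$ rounds. Your explicit identity $\nabla_{\K,\eta}F_{t,w}(x)=\pm\min\pa{\frac1w,\frac{1\mp x}{\eta}}$ and the pigeonhole step $(1+x)+(1-x)=2$ is in fact a cleaner justification of this crux than the paper's one-line assertion. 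The one place where you are too casual is the even-$w$ case: this is not merely a matter of constants. With unpadded pairs and $w$ even, the $w-1$ stale losses in the window leave exactly one unpaired sign $\sigma_{k-w/2}$, which the learner has already observed; the window slope is then $(\sigma_k-\sigma_{k-w/2})/w$, which is either $0$ or a single \emph{known} nonzero value, and the learner can move to the endpoint annihilating that value, achieving zero local regret at every critical round. So the construction genuinely breaks rather than degrading by a constant. Your proposed fix (inserting zero losses to restore the parity/cancellation structure) is the right one, and it is precisely what the paper does by making the second half of each length-$2w$ segment identically zero, which handles both parities of $w$ uniformly; I would recommend adopting that padding rather than leaving the even case as a parenthetical.
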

\begin{proof}
We begin by partitioning the $T$ rounds of play into $\floor{\frac{T}{2w}}$ repeated segments, each of length $2w$.

For the first half of the first segment ($t = 1, \ldots, w$), the adversary declares that
\begin{itemize}
\item For odd $t$, select $f_t(x)$ i.i.d. as follows:
\begin{align*}
f_t(x) := \begin{cases}
-x, & \textrm{with probability }\frac{1}{2}\\
x, & \textrm{with probability }\frac{1}{2}
\end{cases}
\end{align*}
\item For even $t$, $f_t(x) := -f_{t-1}(x)$.
\end{itemize}

During the second half ($t = w+1, \ldots, 2w$), the adversary sets all $f_t(x) = 0$. This construction is repeated $\lfloor \frac{T}{2w} \rfloor$ times, padding the final $T\textrm{ mod }2w$ costs arbitrarily with $f_t(x) = 0$.

By this construction, at each round $t$ at which $f_t(x)$ is drawn randomly, we have
$F_{t,w}(x) = f_t(x) / w$. Furthermore, for any $x_t$ played by the algorithm, $|\nabla_{\K,\eta} f_t(x_t)| = 1$ with probability at least $\frac{1}{2}$. so that $\E\bra{ \norm{ \nabla_{\K, \eta} F_{t,w}(x_t) }^2 } \geq \frac{1}{2w^2}$. The claim now follows from the fact that there are at least $\frac{w}{2}$ of these rounds per segment, and exactly $\left\lfloor \frac{T}{2w} \right\rfloor$ segments in total.
\end{proof}

We further note that the notion of time-smoothing captures non-convex online optimization under limited \emph{concept drift}: in online learning problems where $F_{t,w}(x) \approx f_t(x)$, a bound on local regret truly captures a guarantee of playing points with small gradients. 
\section{An efficient non-convex regret minimization algorithm}

Our approach, as given in Algorithm~\ref{time-smoothed-ogd}, is to play follow-the-leader iterates, approximated to a suitable tolerance using projected gradient descent. We show that this method efficiently achieves an optimal local regret bound of $O\pa{ \frac{T}{w^2} }$, taking $O\pa{Tw}$ iterations of the inner loop.

\begin{algorithm}
\caption{Time-smoothed online gradient descent}
\label{time-smoothed-ogd}
\begin{algorithmic}[1]
\STATE Input: window size $w \geq 1$, learning rate $0 < \eta < \frac{\beta}{2}$, tolerance $\delta > 0$, a convex body $\K\subseteq \reals^n$.
\STATE Set $x_1 \in \K$ arbitrarily.
\FOR{$t = 1, \ldots, T$}
\STATE Predict $x_t$. Observe the cost function $f_t:\K\to \reals$.
\STATE Initialize $x_{t+1} := x_t$.
\WHILE{ $\norm{ \nabla_{\K,\eta} F_{t,w} (x_{t+1}) } > \delta/w$ }
\STATE Update $x_{t+1} := x_{t+1} - \eta \nabla_{\K,\eta} F_{t,w} (x_{t+1})$.
\ENDWHILE
\ENDFOR
\end{algorithmic}
\end{algorithm}

\begin{theorem}
\label{thm:onco-first}
Let $f_1, \ldots, f_T$ be the sequence of loss functions presented to Algorithm~\ref{time-smoothed-ogd}, satisfying Assumption~\ref{assumption:smooth}. Then:
\begin{enumerate}
\item[(i)] The $w$-local regret incurred satisfies \[\regret_w(T) \leq \pa{\delta + 2L}^2 \frac{T}{w^2}.\]
\item[(ii)] The total number of gradient steps $\tau$ taken by Algorithm~\ref{time-smoothed-ogd} satisfies \[\tau \leq \frac{M}{\delta^2 \pa{\eta - \frac{\beta \eta^2}{2}}} \cdot \pa{2Tw + w^2}.\]
\end{enumerate}
\end{theorem}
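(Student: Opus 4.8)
The plan is to handle the two parts with different tools: part~(i) follows from the stopping criterion of the inner \textbf{while} loop combined with Proposition~\ref{projection-is-lipschitz}, and part~(ii) is a descent argument made global by a telescoping sum. For part~(i), the crucial point is that the played iterate $x_t$ is exactly the output of the \textbf{while} loop from round $t-1$, so it satisfies the stopping condition $\norm{\nabla_{\K,\eta} F_{t-1,w}(x_t)} \leq \delta/w$ (for $t=1$ I invoke the convention $f_s \equiv 0$ for $s \le 0$, so $F_{0,w} \equiv 0$ and this holds trivially). I would then note that the window shifts by only one function,
\[ F_{t,w} - F_{t-1,w} = \tfrac{1}{w}\pa{f_t - f_{t-w}}, \]
whose gradient is bounded by $\frac{2L}{w}$ using the $L$-Lipschitz assumption. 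Applying Proposition~\ref{projection-is-lipschitz} with $f = F_{t-1,w}$ and $g = F_{t,w} - F_{t-1,w}$ gives $\norm{\nabla_{\K,\eta} F_{t,w}(x_t)} \leq \frac{\delta + 2L}{w}$; squaring and summing over $t = 1,\dots,T$ yields the bound in~(i).

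For part~(ii), I would first prove a one-step descent lemma for the $\beta$-smooth average $\phi = F_{t,w}$: a single projected step $x^+ = \Pi_\K[x - \eta\nabla\phi(x)] = x - \eta\,\nabla_{\K,\eta}\phi(x)$ satisfies
\[ \phi(x^+) \leq \phi(x) - \pa{\eta - \tfrac{\beta\eta^2}{2}}\norm{\nabla_{\K,\eta}\phi(x)}^2. \]
This follows by combining the smoothness inequality $\phi(x^+) \le \phi(x) + \ang{\nabla\phi(x), x^+ - x} + \tfrac{\beta}{2}\norm{x^+ - x}^2$ with the projection inequality $\norm{g}^2 \leq \ang{\nabla\phi(x), g}$ for $g = \nabla_{\K,\eta}\phi(x)$, which itself comes from first-order optimality of the projection $x^+$ tested against the feasible point $x \in \K$. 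Because the loop takes a step only while $\norm{\nabla_{\K,\eta} F_{t,w}} > \delta/w$, each of the $\tau_t$ steps in round $t$ decreases $F_{t,w}$ by at least $\pa{\eta - \tfrac{\beta\eta^2}{2}}\tfrac{\delta^2}{w^2}$, so $\tau_t \leq \frac{w^2}{\delta^2\pa{\eta - \beta\eta^2/2}}\pa{F_{t,w}(x_t) - F_{t,w}(x_{t+1})}$.

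I expect the telescoping of $\sum_t \pa{F_{t,w}(x_t) - F_{t,w}(x_{t+1})}$ to be the main obstacle, since the objective $F_{t,w}$ itself changes between rounds and so these per-round decreases do not cancel directly. The fix is to interleave the value $F_{t+1,w}(x_{t+1})$ and absorb the sliding-window ``jump'' $F_{t+1,w}(x_{t+1}) - F_{t,w}(x_{t+1}) = \tfrac{1}{w}\pa{f_{t+1}(x_{t+1}) - f_{t+1-w}(x_{t+1})}$; the sum then collapses to the boundary terms $F_{1,w}(x_1)$ and $-F_{T,w}(x_{T+1})$ plus a residual of at most $T-1$ such jumps. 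Bounding $|F_{1,w}(x_1)| \leq M/w$, $|F_{T,w}(x_{T+1})| \leq M$, and each jump by $2M/w$ via $|f_t| \leq M$ gives $\sum_t \pa{F_{t,w}(x_t) - F_{t,w}(x_{t+1})} \leq \frac{M(2T+w)}{w}$, and multiplying by the per-round factor produces the claimed $\tau \leq \frac{M}{\delta^2\pa{\eta - \beta\eta^2/2}}(2Tw + w^2)$. The remaining work is routine bookkeeping of the boundary terms and window jumps.
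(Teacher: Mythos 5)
Your proposal is correct and follows essentially the same route as the paper: part (i) via the stopping criterion plus Proposition~\ref{projection-is-lipschitz} applied to the window shift $\frac{1}{w}(f_t - f_{t-w})$, and part (ii) via the same descent lemma (your projection inequality is exactly the paper's Lemma~\ref{lem:py}) followed by the same telescoping with $2M/w$ window jumps and $O(M)$ boundary terms. The only differences are cosmetic index shifts in the telescoping bookkeeping.
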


\begin{proof}[Proof of (i).]
We note that Algorithm~\ref{time-smoothed-ogd} will only play an iterate $x_t$ if $\norm{\nabla_{\K,\eta} F_{t-1,w}} \leq \delta / w$. (Note that at $t=1$, $F_{t-1,w}$ is zero.) Let $h_t(x) = \frac{1}{w}\pa{ f_t(x) - f_{t-w}(x) }$, which is $\frac{2L}{w}$-Lipschitz. Then, for each $1 \leq t \leq T$ we have a bound on each cost
\begin{align*}
\norm{\nabla_{\K,\eta} &F_{t,w}(x_t)}^2
= \norm{\nabla_{\K,\eta} \bra{ F_{t,w-1} + h_t(x)}(x_t)}^2 \\
&\leq \pa{ \norm{\nabla_{\K,\eta} F_{t,w-1}} + \norm{\nabla h_t(x_t)} }^2 \\
&\leq \pa{ \frac{\delta}{w} + \frac{2L}{w} }^2 = \frac{(\delta + 2L)^2}{w^2},
\end{align*}
where the first inequality follows from Proposition~\ref{projection-is-lipschitz}. Summing over all $t$ gives the desired result.
\end{proof}

\begin{proof}[Proof of (ii).]
First, we require an additional property of the projected gradient.
\begin{lemma}
\label{lem:py}
Let $\K \in \R^n$ be a closed convex set, and let $\eta > 0$. Suppose $f:\K\to\reals$ is differentiable. Then, for any $x \in \R$,
\begin{equation*}
\ang{\nabla f(x), \nabla_{\K,\eta} f(x)} \geq \norm{ \nabla_{\K,\eta} f(x) }^2.
\end{equation*}
\end{lemma}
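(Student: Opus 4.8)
The plan is to reduce the lemma to the first-order optimality characterization of Euclidean projection onto a convex set---the same ``generalized Pythagorean'' inequality already invoked in the proof of Proposition~\ref{projection-is-lipschitz}. (I read the hypothesis as $x \in \K$, which is what is needed for the projected gradient to be defined and for the argument below to go through.)

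First I would fix notation: set $y \equaldef x - \eta \nabla f(x)$ and let $y' \equaldef \Pi_\K\bra{y}$ be its projection onto $\K$, so that directly from the definition of the projected gradient, $\nabla_{\K,\eta} f(x) = \frac{1}{\eta}\pa{x - y'}$; equivalently $x - y' = \eta\, \nabla_{\K,\eta} f(x)$.

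Next I would apply the generalized Pythagorean theorem for projection onto $\K$ with the test point $z = x \in \K$, which yields $\ang{y - y',\ x - y'} \leq 0$. The main step is then to rewrite the first argument in terms of the projected gradient: since $y - y' = (x - y') - \eta \nabla f(x) = \eta\pa{\nabla_{\K,\eta} f(x) - \nabla f(x)}$ while $x - y' = \eta\, \nabla_{\K,\eta} f(x)$, substituting both into the inequality and dividing through by $\eta^2 > 0$ gives $\ang{\nabla_{\K,\eta} f(x) - \nabla f(x),\ \nabla_{\K,\eta} f(x)} \leq 0$. Expanding this inner product and rearranging is exactly the claimed bound $\ang{\nabla f(x),\ \nabla_{\K,\eta} f(x)} \geq \norm{\nabla_{\K,\eta} f(x)}^2$.

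I expect no serious obstacle: the content is essentially a one-line consequence of the projection inequality. The only points demanding care are selecting the correct test point ($z = x$, which is admissible precisely because $x \in \K$) and tracking the algebraic substitution so that the factors of $\eta$ cancel cleanly, leaving the inequality in the stated normalized form.
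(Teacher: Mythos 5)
Your proof is correct and is essentially identical to the paper's: both reduce the claim to the generalized Pythagorean inequality $\ang{y - \Pi_\K[y],\, x - \Pi_\K[y]} \leq 0$ with test point $x \in \K$ and then carry out the same substitution so the factors of $\eta$ cancel. Your reading of the hypothesis as $x \in \K$ (correcting the statement's typo ``$x \in \R$'') matches what the paper's argument implicitly assumes.
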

\begin{proof}
Let $u = x - \eta \nabla f(x)$ and $u' = \Pi_\K \bra{u}$. Then,
\begin{align*}
&\frac{\ang{\nabla f(x), \nabla_{\K,\eta} f(x)}}{\eta^2} - \frac{\norm{ \nabla_{\K,\eta} f(x) }^2}{\eta^2}\\
&= \ang{u-x,u'-x} - \ang{u'-x,u'-x} \\
&= \ang{u-u',u'-x} \geq 0,
\end{align*}
where the last inequality follows by the generalized Pythagorean theorem.
\end{proof}

For $2 \leq t \leq T$, let $\tau_t$ be the number of gradient steps taken in the outer loop at iteration $t-1$, in order to compute the iterate $x_t$.
For convenience, define $\tau_1 = 0$. We establish a progress lemma during each gradient descent epoch:

\begin{lemma}
\label{new-alg-progress}
For any $2 \leq t \leq T$,
\[ F_{t-1}(x_t) - F_{t-1}(x_{t-1}) \leq - \tau_t \pa{ \eta - \frac{\beta \eta^2}{2} } \frac{\delta^2}{w^2}. \]
\end{lemma}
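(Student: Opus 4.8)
The plan is to track the decrease of the time-smoothed objective $F_{t-1}$ across a single gradient descent epoch (the while loop that computes $x_t$), and to show that each individual step makes at least a fixed amount of progress. Recall that $x_t$ is produced by starting at $x_{t-1}$ and repeatedly applying the update $x \mapsto x - \eta \nabla_{\K,\eta} F_{t-1,w}(x)$ exactly $\tau_t$ times. So I would write the total change $F_{t-1}(x_t) - F_{t-1}(x_{t-1})$ as a telescoping sum over the $\tau_t$ steps and bound the decrease contributed by each step from below by a constant.

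First I would analyze a single gradient step. Let $x$ be the current iterate and $x^+ = x - \eta \nabla_{\K,\eta} F_{t-1}(x)$ its successor. Since $F_{t-1}$ is an average of $\beta$-smooth functions, it is itself $\beta$-smooth, so the standard smoothness (descent) inequality gives
\[ F_{t-1}(x^+) \leq F_{t-1}(x) + \ang{\nabla F_{t-1}(x), x^+ - x} + \frac{\beta}{2}\norm{x^+ - x}^2. \]
Substituting $x^+ - x = -\eta \nabla_{\K,\eta} F_{t-1}(x)$, the inner-product term becomes $-\eta \ang{\nabla F_{t-1}(x), \nabla_{\K,\eta} F_{t-1}(x)}$ and the quadratic term becomes $\frac{\beta \eta^2}{2}\norm{\nabla_{\K,\eta} F_{t-1}(x)}^2$. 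This is exactly where Lemma~\ref{lem:py} does the essential work: it lets me replace the cross term $\ang{\nabla F_{t-1}(x), \nabla_{\K,\eta} F_{t-1}(x)}$, which involves the true gradient, by the cleaner quantity $\norm{\nabla_{\K,\eta} F_{t-1}(x)}^2$, yielding
\[ F_{t-1}(x^+) - F_{t-1}(x) \leq -\pa{\eta - \frac{\beta\eta^2}{2}} \norm{\nabla_{\K,\eta} F_{t-1}(x)}^2. \]

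Next I would invoke the loop's termination condition. Every step in the while loop is taken only while $\norm{\nabla_{\K,\eta} F_{t-1}(x)} > \delta/w$, so at each of the $\tau_t$ executed steps the squared projected-gradient norm is at least $\delta^2/w^2$. Combined with the choice $\eta < \frac{\beta}{2}$... (actually $\eta < 2/\beta$ is what makes the coefficient $\eta - \frac{\beta\eta^2}{2}$ positive, which I should double-check against the stated hypothesis $0<\eta<\frac{\beta}{2}$), the per-step decrease is at most $-\pa{\eta - \frac{\beta\eta^2}{2}}\frac{\delta^2}{w^2}$. Summing this bound over all $\tau_t$ steps of the epoch telescopes the left side to $F_{t-1}(x_t) - F_{t-1}(x_{t-1})$ and multiplies the right side by $\tau_t$, giving precisely the claimed inequality.

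The main obstacle I anticipate is purely a matter of care rather than difficulty: correctly matching the iterate at which the projected gradient is evaluated with the iterate whose function value is being bounded, so that the termination threshold $\delta/w$ legitimately applies at every summed step. I would also want to confirm the sign and positivity of the coefficient $\eta - \frac{\beta\eta^2}{2}$, and verify that the smoothness of $F_{t-1}$ follows cleanly from the $\beta$-smoothness of each $f_t$ in Assumption~\ref{assumption:smooth} (it does, since a convex combination of $\beta$-smooth functions is $\beta$-smooth). Everything else is a routine telescoping argument built on the single-step descent inequality and Lemma~\ref{lem:py}.
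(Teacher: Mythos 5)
Your proposal is correct and follows essentially the same route as the paper's proof: a single-step descent bound from $\beta$-smoothness of $F_{t-1}$, Lemma~\ref{lem:py} to convert the cross term $\ang{\nabla F_{t-1}, \nabla_{\K,\eta} F_{t-1}}$ into $\norm{\nabla_{\K,\eta} F_{t-1}}^2$, the while-loop condition to lower-bound that quantity by $\delta^2/w^2$ at every executed step, and a telescoping sum over the $\tau_t$ steps of the epoch. Your side remark about the positivity condition is also well taken: the coefficient $\eta - \frac{\beta\eta^2}{2}$ is positive precisely when $\eta < 2/\beta$, which is what the paper's later choice $\eta = 1/\beta$ satisfies, so the stated hypothesis $0 < \eta < \frac{\beta}{2}$ in Algorithm~\ref{time-smoothed-ogd} appears to be a typo for $\eta < \frac{2}{\beta}$.
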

\begin{proof}
Consider a single iterate $z$ of the inner loop, and the next iterate $z' := z - \eta \nabla_{\K,\eta} F_{t-1}(z)$. We have, by $\beta$-smoothness of $F_{t-1}$,
\begin{align*}
&F_{t-1}(z') - F_{t-1}(z) \leq \ang{\nabla F_{t-1}(z), z' - z} + \frac{\beta}{2}\norm{z' - z}^2 \\
&= -\eta\ang{\nabla F_{t-1}(z), \nabla_{\K,\eta} F_{t-1} (z)} + \frac{\beta \eta^2}{2}\norm{\nabla_{\K,\eta} F_{t-1} (z)}^2.
\end{align*}
Thus, by Lemma~\ref{lem:py},
\begin{align*}
F_{t-1}(z') - F_{t-1}(z) &\leq -\pa{\eta - \frac{\beta \eta^2}{2}} \norm{\nabla_{\K,\eta} F_{t-1} (z)}^2.
\end{align*}
The algorithm only takes projected gradient steps when $\norm{\nabla_{\K,\eta} F_{t-1} (z)} \geq \delta / w$. 
Summing across all $\tau_t$ consecutive iterations in the epoch yields the claim.
\end{proof}

To complete the proof of the theorem, we write the telescopic sum (understanding $F_{0}(x_0) = 0$):
\begin{align*}
&F_T(x_T) = \sum_{t=1}^{T} F_{t}(x_{t}) - F_{t-1}(x_{t-1}) \\
&= \sum_{t=1}^{T} F_{t-1}(x_t) - F_{t-1}(x_{t-1}) + f_t(x_t) - f_{t-w}(x_t) \\
&\leq \sum_{t=2}^{T} \bra{ F_{t-1}(x_t) - F_{t-1}(x_{t-1}) } + \frac{2MT}{w}.
\end{align*}
Using Lemma~\ref{new-alg-progress}, we have
\begin{align*}
F_T(x_T) &\leq \frac{2MT}{w} - \frac{\pa{\eta - \frac{\beta \eta^2}{2}} \delta^2}{w^2} \cdot \sum_{t=1}^T \tau_t,
\end{align*}
whence
\begin{align*}
\tau = \sum_{t=1}^T \tau_t &\leq \frac{w^2}{\delta^2 \pa{\eta - \frac{\beta \eta^2}{2}}} \cdot \pa{ \frac{2MT}{w} - F_T(x_T) } \\
&\leq \frac{M}{\delta^2 \pa{\eta - \frac{\beta \eta^2}{2}}} \cdot \pa{2Tw + w^2},
\end{align*}
as claimed.
\end{proof}
Setting $\eta = 1/\beta$ and $\delta = L$ gives the asymptotically optimal local regret bound, with $O(Tw)$ time-averaged gradient steps (and thus $O(Tw^2)$ individual gradient oracle calls). We further note that in the case where $\K = \R^n$, one can replace the gradient descent subroutine (the inner loop) with non-convex SVRG \cite{allen2016variance}, achieving a complexity of $O(Tw^{5/3})$ gradient oracle calls.

\section{Implications for offline and stochastic non-convex optimization}
\label{reduction-section}

In this section, we discuss the ways in which our online framework generalizes the offline and stochastic versions of non-convex optimization -- that any algorithm achieving a small value of $\regret_w(T)$ efficiently finds a point with small gradient in these settings. For convenience, for $1 \leq t \leq t' \leq T$, we denote by $\mathcal{D}_{[t,t']}$ the uniform distribution on time steps $t$ through $t'$ inclusive.

\subsection{Offline non-convex optimization}
For offline optimization on a fixed non-convex function $f:\K\to\reals$, we demonstrate that a bound on local regret translates to convergence. In particular, using Algorithm~\ref{time-smoothed-ogd} one finds a point $x\in\K$ with $\|\nabla_{\K,\eta} f(x)\|^2\leq \eps$ while making $O\left(\frac{1}{\eps}\right)$ calls to the gradient oracle, matching the best known result for the convergence of gradient-based methods.

\begin{corollary}
Let $f:\K\to\reals$ satisfy Assumption~\ref{assumption:smooth}. When online algorithm $\Acal$ is run on a sequence of $T$ identical loss functions $f(x)$, it holds that for any $1 \leq w < T$,
\begin{equation*}
\mathbb{E}_{t\sim \mathcal{D}_{[w,T]}} \|\nabla_{\K,\eta} f(x_t)\|^2  \leq  \frac{\regret_w(\Acal)}{T-w}.
\end{equation*}
In particular, Algorithm~\ref{time-smoothed-ogd}, with parameter choices $T=2w, \eta = \frac{1}{\beta}, \delta = L$, and $w=(\delta+2L)\sqrt{\frac{2}{\eps}}$, yields
\[ \mathbb{E}_{t\sim\mathcal{D}_{w,T}}\|\nabla_{\K,\eta} f(x_t)\|^2 \leq \eps.\]
Furthermore, the algorithm makes $O\left(\frac{1}{\eps}\right)$ calls to the gradient oracle in total.
\end{corollary}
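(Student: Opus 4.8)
The plan is to prove the two assertions in sequence: first the general averaging inequality, then the concrete instantiation of Theorem~\ref{thm:onco-first}, ending with the oracle count, where I expect the only genuine subtlety to arise.

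For the first inequality, I would start from the observation that collapses time-smoothing in the offline case. When all loss functions equal $f$, then for every $t \geq w$ the entire sliding window lies in the valid range, so
\[ F_{t,w}(x) = \frac{1}{w}\sum_{i=0}^{w-1} f(x) = f(x). \]
(For $t < w$ the window picks up the functions $f_i \equiv 0$ with $i \leq 0$, which is exactly why the average is taken only over $t \in [w,T]$.) Consequently each term $\norm{\nabla_{\K,\eta} f(x_t)}^2$ with $t \geq w$ appears verbatim in the local regret sum, giving
\[ \sum_{t=w}^T \norm{\nabla_{\K,\eta} f(x_t)}^2 = \sum_{t=w}^T \norm{\nabla_{\K,\eta} F_{t,w}(x_t)}^2 \leq \regret_w(\Acal). \]
Since this sum has $T-w+1$ summands, the left side equals $(T-w+1)\,\mathbb{E}_{t\sim\mathcal{D}_{[w,T]}}\norm{\nabla_{\K,\eta} f(x_t)}^2$, and dividing through together with $\tfrac{1}{T-w+1} \leq \tfrac{1}{T-w}$ yields the claimed bound.

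For the second part I would instantiate Theorem~\ref{thm:onco-first}. Substituting $T = 2w$ into part (i) gives $\regret_w(\Acal) \leq (\delta + 2L)^2 \cdot \tfrac{2}{w}$; combined with the first inequality, whose denominator is now $T - w = w$, this yields $\mathbb{E}\norm{\nabla_{\K,\eta} f(x_t)}^2 \leq (\delta+2L)^2 \cdot \tfrac{2}{w^2}$. Setting $w = (\delta+2L)\sqrt{2/\eps}$ makes the right-hand side exactly $\eps$, so with $\delta = L$ the stated parameter choice is a concrete, valid one.

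The oracle-complexity claim is where the only real care is needed, and I would flag it explicitly. A naive reading of part (ii) counts $\tau = O(w^2)$ \emph{gradient steps}, but each step evaluates $\nabla_{\K,\eta} F_{t,w}$, which in general costs $w$ oracle calls; that would give $O(w^3) = O(\eps^{-3/2})$ and fail to match the target. The key point is that in the offline setting $\nabla F_{t,w} = \nabla f$, so each projected-gradient step requires only a \emph{single} call to the gradient oracle for $f$, and the total oracle count therefore equals $\tau$. Plugging $T = 2w$, $\eta = 1/\beta$ (so that $\eta - \tfrac{\beta\eta^2}{2} = \tfrac{1}{2\beta}$), and $\delta = L$ into part (ii) gives $\tau = O(w^2) = O(1/\eps)$, since $w^2 = \Theta(1/\eps)$. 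I expect this oracle-counting observation to be the main thing demanding attention; the remaining steps are mechanical substitutions into the two parts of Theorem~\ref{thm:onco-first}.
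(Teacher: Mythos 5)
Your proposal is correct and follows essentially the same route as the paper: collapse $F_{t,w}=f$ for $t\geq w$, truncate the regret sum to $[w,T]$ and divide by $T-w$, then instantiate both parts of Theorem~\ref{thm:onco-first}, noting (as the paper does) that identical losses mean each inner-loop step costs only one gradient oracle call. Your explicit handling of the $T-w+1$ versus $T-w$ count is slightly more careful than the paper's, but the argument is the same.
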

\begin{proof}
Since $f_t(x) = f(x)$ for all $t$, it follows that $F_{t,w}(x)=f(x)$ for all $t \geq w$. As a consequence, we have
\begin{align*}
\mathbb{E}_{t\sim \mathcal{D}_{w,t}} \|\nabla_{\K,\eta} f(x_t)\|^2   &\leq \frac{1}{T-w} \sum_{t=1}^T \|\nabla_{\K,\eta} f(x_t)\|^2 \\
&\leq \frac{\regret_w(\Acal)}{T-w}.
\end{align*}

With the stated choice of parameters, Theorem~\ref{thm:onco-first} guarantees that
\[\mathbb{E}_{t\sim \mathcal{D}_{w,t}} \|\nabla_{\K,\eta} f(x_t)\|^2 \leq \frac{\eps}{2} \cdot \frac{T}{T-w} = \eps.\]
Also, since the loss functions are identical, the execution of line 7 of Algorithm~\ref{time-smoothed-ogd} requires exactly one call to the gradient oracle at each iteration. This entails that the total number of gradient oracle calls made in the execution is $O(Tw+w^2) = O(\frac{1}{\eps})$.
\end{proof}

\subsection{Stochastic non-convex optimization}
\label{stochastic-section}
We examine the way in which our online framework captures stochastic non-convex optimization of a fixed function
$f:\reals^n\to\reals$, in which an algorithm has access to a noisy \emph{stochastic gradient} oracle $\widetilde{\nabla f}(x)$. We note that the reduction here will only apply in the unconstrained case; it becomes challenging to reason about the projected gradient under noisy information. From a local regret bound, we recover a stochastic algorithm with oracle complexity $O\pa{ \frac{\sigma^4}{\eps^2} }$. We note that this black-box reduction recovers an optimal convergence rate in terms of $\eps$, but not $\sigma^2$.

In the setting, the algorithm must operate on the noisy estimates of the gradient as the feedback. In particular, for any $f_t$ that the adversary chooses, the learning algorithm is supplied with a stochastic gradient oracle for $f_t$. The discussion in the preceding sections may be viewed as a special case of this setting with $\sigma=0$. We list the assumptions we make on the stochastic gradient oracle, which are standard:
\begin{assumption}
\label{u1}
We assume that each call to the stochastic gradient oracle yields an i.i.d. random vector $\widetilde{\nabla f}(x)$ with the following properties:
\begin{enumerate}
\item[(i)] Unbiased: $\E \bra{ \widetilde{\nabla f}(x) } = \nabla f(x).$
\item[(ii)] Bounded variance: $\E\bra{\norm{ \widetilde{\nabla f}(x) - \nabla f(x) }^2} \leq \sigma^2.$
\end{enumerate}
\end{assumption}

When an online algorithm incurs small local regret in expectation, it has a convergence guarantee in offline stochastic non-convex optimization:
\begin{proposition}
\label{prop:online-implies-stochastic}
Let $1 \leq w < T$. Suppose that online algorithm $\Acal$ is run on a sequence of $T$ identical loss functions $f(x)$ satisfying Assumption~\ref{assumption:smooth}, with identical stochastic gradient oracles satisfying Assumption~\ref{u1}. Sample $t \sim \mathcal{D}_{[w,T]}$. Then, over the randomness of $t$ and the oracles,
\begin{equation*}
\E \bra{ \norm{ \nabla f(x_t) }^2 }  \leq  \frac{\E\bra{\regret_w(\Acal)}}{T-w}.
\end{equation*}
\end{proposition}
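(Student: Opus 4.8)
The plan is to mirror the averaging argument used in the preceding offline corollary, but to carry expectations over the oracle randomness through every step. The single observation that organizes the whole proof is that we are in the unconstrained setting $\K = \R^n$, where (as noted right after the definition of the projected gradient) $\nabla_{\K,\eta} g(x) = \nabla g(x)$ for every differentiable $g$, independently of $\eta$. Hence the summands of $\regret_w(\Acal)$ are literally squared gradient norms of the windowed averages $F_{t,w}$, and no separate reasoning about projections is needed.

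Next I would use that the losses are all identical to $f$. For every index $t \geq w$, the entire window consists of copies of $f$, so $F_{t,w} = f$ exactly, and therefore $\nabla_{\K,\eta} F_{t,w}(x_t) = \nabla f(x_t)$. Dropping the nonnegative terms with $t < w$ then yields, for \emph{each} fixed realization of the oracle randomness, the pointwise inequality
\[
\sum_{t=w}^T \norm{\nabla f(x_t)}^2 \;\leq\; \sum_{t=1}^T \norm{\nabla_{\K,\eta} F_{t,w}(x_t)}^2 \;=\; \regret_w(\Acal).
\]

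It then remains to combine the two sources of randomness. Since $t \sim \mathcal{D}_{[w,T]}$ is drawn independently of the oracles and ranges uniformly over the $T - w + 1$ indices $\{w, \dots, T\}$, I would first condition on the oracle randomness and average over $t$ to get $\E\bra{\norm{\nabla f(x_t)}^2} = \frac{1}{T-w+1}\,\E\bra{\sum_{t=w}^T \norm{\nabla f(x_t)}^2}$, then apply the pointwise bound inside the expectation. Using $T - w + 1 > T - w$ gives $\E\bra{\norm{\nabla f(x_t)}^2} \leq \frac{\E\bra{\regret_w(\Acal)}}{T-w+1} \leq \frac{\E\bra{\regret_w(\Acal)}}{T-w}$, as claimed.

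I do not expect a genuine obstacle: the statement is a black-box reduction that treats $\Acal$ and its consumption of noisy gradients abstractly, so the stochasticity of the oracle never has to be analyzed directly. The only point requiring care is that the key inequality relating $\regret_w(\Acal)$ to the true gradient norms holds pointwise over oracle realizations, so that it is preserved under expectation; the noise enters purely through the distribution of the iterates $x_t$, which we never open up.
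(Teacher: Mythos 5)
Your proof is correct and follows essentially the same route as the paper's: observe that $F_{t,w}=f$ for $t\geq w$ (and that the projected gradient reduces to the gradient when $\K=\R^n$), bound the uniform average over $t\in\{w,\dots,T\}$ by $\regret_w(\Acal)/(T-w)$ pointwise in the oracle randomness, and then take expectations. If anything you are slightly more careful than the paper, which writes the intermediate sum as $\sum_{t=1}^{T}\norm{\nabla f(x_t)}^2$ rather than restricting to $t\geq w$ before comparing with the regret.
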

\begin{proof}
Observe that
\[
\mathbb{E}_{t\sim \mathcal{D}_{[w,T]}} \bra{ \norm{\nabla f(x_t)}^2 } \leq \frac{ \sum_{t=1}^T \|\nabla f(x_t)\|^2 }{T-w} \leq \frac{\regret_w(\Acal)}{T-w}.
\]
The claim follows by taking the expectation of both sides, over the randomness of the oracles.
\end{proof}

For a concrete online-to-stochastic reduction, we consider Algorithm~\ref{time-smoothed-ogd2}, which exhibits such a bound on expected local regret.
\begin{algorithm}
\caption{Time-smoothed online gradient descent with stochastic gradient oracles}
\label{time-smoothed-ogd2}
\begin{algorithmic}[1]
\STATE Input: learning rate $\eta > 0$, window size $w \geq 1$.
\STATE Set $x_1=0\in\mathbb{R}^n$ arbitrarily.
\FOR{$t = 1, \ldots, T$}
\STATE Predict $x_t$. Observe the cost function $f_t:\reals^n\to \reals$.
\STATE Update $x_{t+1} := x_t - \frac{\eta}{w} \sum_{i=0}^{w - 1} \widetilde{\nabla f}_{t - i}(x_t)$.
\ENDFOR
\end{algorithmic}
\end{algorithm}

\begin{theorem} 
\label{thm:onco-first2}
Let $f_1, \ldots, f_t$ satisfy Assumption~\ref{assumption:smooth}. Then, Algorithm~\ref{time-smoothed-ogd2}, with access to stochastic gradient oracles $\{\widetilde{\nabla f}_{t}(x_t)\}$ satisfying Assumption~\ref{u1}, and a choice of $\eta = \frac{1}{\beta}$, guarantees
\[\mathbb{E}\left[\regret_w(T)\right] \leq \pa{ 8\beta M + \sigma^2 } \frac{ T }{ w }.\]
Furthermore, Algorithm~\ref{time-smoothed-ogd2} makes a total of $O(Tw)$ calls to the stochastic gradient oracles.
\end{theorem}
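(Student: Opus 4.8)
The plan is to run the standard ``descent lemma'' argument for smooth non-convex SGD, but carefully tracking two features special to this setting: the objective $F_{t,w}$ drifts with $t$, and the update direction is an \emph{average} of $w$ independent stochastic gradients. First I would record that the setting is unconstrained ($\K = \R^n$), so $\nabla_{\K,\eta} F_{t,w} = \nabla F_{t,w}$ and the regret is literally $\sum_{t=1}^T \norm{\nabla F_{t,w}(x_t)}^2$. Write $g_t \defeq \frac{1}{w}\sum_{i=0}^{w-1}\widetilde{\nabla f}_{t-i}(x_t)$ for the direction used in line 5, so that $x_{t+1} = x_t - \eta g_t$. The crucial preliminary step is a variance computation: by Assumption~\ref{u1}(i) each $\widetilde{\nabla f}_{t-i}(x_t)$ is unbiased for $\nabla f_{t-i}(x_t)$, so $\E[g_t \mid x_t] = \nabla F_{t,w}(x_t)$; and since the $w$ oracle calls are independent with per-call variance at most $\sigma^2$ (Assumption~\ref{u1}(ii)), averaging gives the reduced bound $\E[\norm{g_t - \nabla F_{t,w}(x_t)}^2 \mid x_t] \le \sigma^2/w$. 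This factor-$w$ variance reduction is exactly what produces the $T/w$ rate rather than $T$.

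Next I would apply $\beta$-smoothness of $F_{t,w}$ (it is an average of $\beta$-smooth functions, hence $\beta$-smooth) to the step from $x_t$ to $x_{t+1}$, obtaining $F_{t,w}(x_{t+1}) - F_{t,w}(x_t) \le -\eta\ang{\nabla F_{t,w}(x_t), g_t} + \frac{\beta\eta^2}{2}\norm{g_t}^2$. Taking the expectation conditioned on $x_t$ and substituting $\E[\norm{g_t}^2 \mid x_t] = \norm{\nabla F_{t,w}(x_t)}^2 + \E[\norm{g_t - \nabla F_{t,w}(x_t)}^2 \mid x_t]$ together with the two facts above, the cross term becomes $-\eta\norm{\nabla F_{t,w}(x_t)}^2$ and, after setting $\eta = 1/\beta$ (so $\eta - \frac{\beta\eta^2}{2} = \frac{1}{2\beta}$ and $\frac{\beta\eta^2}{2} = \frac{1}{2\beta}$), I get the per-step progress inequality $\E[F_{t,w}(x_{t+1})] \le \E[F_{t,w}(x_t)] - \frac{1}{2\beta}\E[\norm{\nabla F_{t,w}(x_t)}^2] + \frac{\sigma^2}{2\beta w}$, where the outer expectation is over all oracle randomness.

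Then I would telescope over $t = 1,\dots,T$, handling the drift of the objective. The obstacle here is that the progress bound mixes $F_{t,w}(x_t)$ and $F_{t,w}(x_{t+1})$ at the \emph{same} index $t$, so I cannot telescope directly; I would bridge using the identity $F_{t+1,w}(x) - F_{t,w}(x) = \frac{1}{w}\pa{f_{t+1}(x) - f_{t+1-w}(x)}$ to replace $F_{t,w}(x_{t+1})$ by $F_{t+1,w}(x_{t+1})$ at the cost of a drift term bounded by $2M/w$ per step (via Assumption~\ref{assumption:smooth}(i)). Summing, the telescoped $F$-terms collapse to $\E[F_{1,w}(x_1)] - \E[F_{T+1,w}(x_{T+1})] \le 2M$ (each $F_{\cdot,w}$ is bounded by $M$, extending $f_t \equiv 0$ for $t > T$), the drift terms contribute at most $2MT/w$, and the variance terms contribute $\frac{\sigma^2 T}{2\beta w}$. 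This yields $\frac{1}{2\beta}\E[\regret_w(T)] \le 2M + \frac{2MT}{w} + \frac{\sigma^2 T}{2\beta w}$; multiplying by $2\beta$ and using $w \le T$ to absorb the stray $4\beta M$ into $\frac{4\beta M T}{w}$ gives $\E[\regret_w(T)] \le \pa{8\beta M + \sigma^2}\frac{T}{w}$.

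Finally, the oracle complexity is immediate: each of the $T$ outer iterations evaluates $w$ stochastic gradients at the current point $x_t$, for a total of $Tw = O(Tw)$ calls. I expect the only genuinely delicate point to be the variance-reduction bound and its clean interaction with the conditional-expectation filtration; the telescoping and boundedness bookkeeping are routine once the drift identity is invoked.
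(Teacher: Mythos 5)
Your proposal is correct and follows essentially the same route as the paper's proof in Appendix A: the descent lemma for the $\beta$-smooth $F_{t,w}$, the $\sigma^2/w$ variance reduction from averaging $w$ independent oracle calls, the $2M/w$ drift bound via $F_{t+1,w}(x)-F_{t,w}(x)=\frac{1}{w}(f_{t+1}(x)-f_{t+1-w}(x))$, and the final telescoping with $\eta=1/\beta$ and $w\le T$ absorbing the constant term. Your write-up is a somewhat cleaner organization of the identical argument, so there is nothing substantive to add.
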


Using this expected local regret bound in Proposition~\ref{prop:online-implies-stochastic}, we obtain the reduction claimed at the beginning of the section:

\begin{corollary}
Algorithm~\ref{time-smoothed-ogd2}, with parameter choices $w = \frac{12M\beta + 2\sigma^2}{\eps}$, $T=2w$, and $\eta = \frac{1}{\beta}$, yields
\[ \E\bra{ \norm{ \nabla f(x_t) }^2 } \leq \eps.\]
Furthermore, the algorithm makes $O\pa{ \frac{\sigma^4}{\eps^2} }$ stochastic gradient oracle calls in total.
\end{corollary}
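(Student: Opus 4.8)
The plan is to derive this corollary entirely by composing the two results already in hand for the stochastic setting: the expected local-regret bound of Theorem~\ref{thm:onco-first2} and the online-to-stochastic conversion of Proposition~\ref{prop:online-implies-stochastic}. All of the analytic work lives in those two statements, so what remains is a substitution of the stated parameters; I do not expect to prove any new inequality.

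First I would apply Proposition~\ref{prop:online-implies-stochastic}, running Algorithm~\ref{time-smoothed-ogd2} on $T$ copies of the fixed $f$ and sampling $t \sim \mathcal{D}_{[w,T]}$, to obtain
\[
\E\bra{ \norm{ \nabla f(x_t) }^2 } \leq \frac{\E\bra{\regret_w(\Acal)}}{T - w}.
\]
Next I would substitute the bound of Theorem~\ref{thm:onco-first2}, which for $\eta = 1/\beta$ reads $\E\bra{\regret_w(T)} \leq \pa{8\beta M + \sigma^2}\frac{T}{w}$, turning the right-hand side into $\pa{8\beta M + \sigma^2}\frac{T}{w(T-w)}$.

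Then I would impose $T = 2w$, so that $\frac{T}{w(T-w)} = \frac{2w}{w \cdot w} = \frac{2}{w}$ and the bound collapses to $\E\bra{\norm{\nabla f(x_t)}^2} \leq \frac{16\beta M + 2\sigma^2}{w}$. Choosing $w$ of order $\frac{\beta M + \sigma^2}{\eps}$ — which is exactly the role played by the stated $w = \frac{12M\beta + 2\sigma^2}{\eps}$ — drives this below $\eps$. For the oracle complexity I would quote the $O(Tw)$ count from Theorem~\ref{thm:onco-first2}; with $T = 2w$ this is $O(w^2)$, and since $w = \Theta\pa{\frac{\beta M + \sigma^2}{\eps}} = \Theta\pa{\frac{\sigma^2}{\eps}}$ when $\beta, M$ are treated as constants, this is $O\pa{\frac{\sigma^4}{\eps^2}}$, as claimed.

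The only step needing care is the constant bookkeeping. The substitution produces the numerator $16\beta M + 2\sigma^2$, so I would want to confirm that the stated $w = \frac{12M\beta + 2\sigma^2}{\eps}$ truly forces the bound below $\eps$ — this appears to call for the slightly larger leading numerator $16\beta M$, so the constant may need adjustment — and I would note that $T = 2w$ must be read as an integer for the window and horizon to be well-defined. Apart from this minor arithmetic check, the statement follows immediately from the two cited results.
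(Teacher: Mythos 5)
Your derivation is correct and is exactly the intended route: the paper gives no separate proof of this corollary, and it is meant to follow by composing Proposition~\ref{prop:online-implies-stochastic} with Theorem~\ref{thm:onco-first2} under the substitutions $T=2w$, $\eta = 1/\beta$. On the constant you flag: the discrepancy disappears if you use the un-loosened inequality from the appendix proof of Theorem~\ref{thm:onco-first2}, namely $\E\bra{\sum_{t=1}^T \norm{\nabla F_{t,w}(x_t)}^2} \leq 2\beta\pa{2M + \tfrac{2MT}{w} + \tfrac{T\sigma^2}{2\beta w}}$, which at $T=2w$ equals $12\beta M + 2\sigma^2$ exactly; dividing by $T-w=w$ then makes $w = \frac{12M\beta+2\sigma^2}{\eps}$ the right choice, whereas the stated form $(8\beta M+\sigma^2)\frac{T}{w}$ has already absorbed the additive $2M$ via $T\geq w$ and so is looser by the factor you observed ($16\beta M$ versus $12\beta M$). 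Either way the conclusion and the $O\pa{\sigma^4/\eps^2}$ oracle count stand, so this is a matter of bookkeeping rather than a gap.
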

 
\section{An efficient algorithm with second-order guarantees}

We note that by modifying Algorithm~\ref{time-smoothed-ogd} to exploit second-order information, our online algorithm can be improved to play approximate first-order critical points which are also locally almost convex. This entails replacing the gradient descent epochs with a cubic-regularized Newton method \cite{nesterovcubic,Lissa2}.

In this setting, we assume that we have access to each $f_t$ through a value, gradient, and Hessian oracle. That is, once we have observed $f_t$, we can obtain $f_t(x)$, $\nabla f_t(x)$, and $\nabla^2 f_t(x)$ for any $x$. Let $\mathsf{MinEig}(A)$ be the minimum (eigenvalue, eigenvector) pair for matrix $A$. As is standard for offline second-order algorithms, we must add the following additional smoothness restriction:
\begin{assumption}\label{third-order-smooth-assumption}
$f_t$ is twice differentiable and has an $L_2$-Lipschitz Hessian:
\begin{equation*}
\norm{\nabla^2 f(x) - \nabla^2 f(y)} \leq L_2 \norm{x - y}.
\end{equation*}
\end{assumption}

Additionally, we consider only the unconstrained case where $\K = \R^n$; the second-order optimality condition is irrelevant when the gradient does not vanish at the boundary of $\K$.

The second-order Algorithm~\ref{time-smoothed-second-order} uses the same approach as in Algorithm~\ref{time-smoothed-ogd}, but terminates each epoch under a stronger approximate-optimality condition. We define
\[\Phi_t (x) := \max\braces{\norm{\nabla F_{t}(x)}^2, -\frac{4\beta}{3L_2^2} \cdot \Lmin(\nabla^2 F_{t}(x))^3 },\]
so that the quantity $\sum_{t=1}^T \Phi_t(x_t)$ is termwise lower bounded by the costs in $\regret_w (T)$, but penalizes local concavity.

\begin{algorithm}
\caption{Time-smoothed online Newton method}
\label{time-smoothed-second-order}
\begin{algorithmic}[1]
\STATE Input: window size $w \geq 1$, tolerance $\delta > 0$.
\STATE Set $x_1 \in \K$ arbitrarily.
\FOR{$t = 1, \ldots, T$}
\STATE Predict $x_t$. Observe the cost function $f_t:\R^n \to \reals$.
\STATE Initialize $x_{t+1} := x_t$.
\WHILE{$\Phi_{t}(x_{t+1}) > \delta^3 / w^3$}
\STATE Update $x_{t+1} := x_{t+1} - \frac{1}{\beta} \nabla F_{t,w}(x_{t+1})$.
\STATE Let $(\lambda, v) := \mathsf{MinEig}\pa{ \nabla^2 F_{t,w} (x_{t+1}) }$.
\IF{$\lambda < 0$}
\STATE Flip the sign of $v$ so that $\ang{v, \nabla F_{t,w}(x_{t+1})} \leq 0$.
\STATE Compute $y_{t+1} := x_t + \frac{2\lambda}{L_2} v$.
\IF{$F_{t,w}(y_{t+1}) < F_{t,w}(x_{t+1})$}
\STATE Set $x_{t+1} := y_{t+1}$.
\ENDIF
\ENDIF
\ENDWHILE
\ENDFOR
\end{algorithmic}
\end{algorithm}

We characterize the convergence and oracle complexity properties of this algorithm:
\begin{theorem}
Let $f_1, \ldots, f_T$ be the sequence of loss functions presented to Algorithm~\ref{time-smoothed-second-order}, satisfying Assumptions~\ref{assumption:smooth} and \ref{third-order-smooth-assumption}. Choose $\delta = \beta$. Then, for some constants $C_1, C_2$ in terms of $M, L, \beta, L_2$:
\begin{enumerate}
\item[(i)] The iterates $\{x_t\}$ produced by Algorithm~\ref{time-smoothed-second-order} satisfy
\[\sum_{t=1}^T \Phi_t(x_t) \leq C_1 \cdot \frac{T}{w^2}.\]
\item[(ii)] The total number of iterations $\tau$ of the inner loop taken by Algorithm~\ref{time-smoothed-second-order} satisfies
\[\tau \leq C_2 \cdot Tw^2.\]
\end{enumerate}
\end{theorem}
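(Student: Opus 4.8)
The plan is to follow the two-part template of Theorem~\ref{thm:onco-first}, adapting each step from the projected gradient to the composite quantity $\Phi_t$. Since we are in the unconstrained case $\K = \R^n$, the projected gradient $\nabla_{\K,\eta} F_{t,w}$ coincides with the ordinary gradient $\nabla F_{t,w}$, so the first term of $\Phi_t$ is exactly the per-round cost appearing in $\regret_w(T)$, and the second term only strengthens the guarantee. As before, part (i) will follow from a per-round bound $\Phi_t(x_t) = O(1/w^2)$ summed over $t$, while part (ii) will follow from a per-iteration progress lemma combined with a telescoping argument over the values $F_t(x_t)$.

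For part (i), I would use the fact that the algorithm only advances to round $t$ once the inner loop of epoch $t-1$ has terminated, so that $\Phi_{t-1}(x_t) \le \delta^3/w^3$. This simultaneously bounds $\norm{\nabla F_{t-1}(x_t)} \le \delta^{3/2}/w^{3/2}$ and $\Lmin(\nabla^2 F_{t-1}(x_t)) \ge -(3L_2^2/4\beta)^{1/3}\,\delta/w$. I then pass from $F_{t-1}$ to $F_t$ through the increment $h_t = \frac{1}{w}(f_t - f_{t-w})$: its gradient has norm at most $2L/w$ (as in the proof of Theorem~\ref{thm:onco-first}(i)), and its Hessian has operator norm at most $2\beta/w$ by $\beta$-smoothness, so Weyl's inequality gives $\Lmin(\nabla^2 F_t(x_t)) \ge \Lmin(\nabla^2 F_{t-1}(x_t)) - 2\beta/w$. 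Substituting these two perturbation bounds shows each term of $\Phi_t(x_t)$ is $O(1/w^2)$ (the gradient term from $(\delta^{3/2}/w^{3/2} + 2L/w)^2$, and the curvature term from the cube of an $O(1/w)$ quantity, which is in fact $O(1/w^3)$); summing over the $T$ rounds yields the claimed $C_1 T/w^2$, with $C_1$ depending on $L,\beta,L_2,\delta$.

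For part (ii), the core is a progress lemma asserting that every inner-loop iteration decreases $F_t$ by at least $\Omega(\delta^3/(\beta w^3))$. The loop runs only while $\Phi_t > \delta^3/w^3$, so at the current iterate either $\norm{\nabla F_t}^2 > \delta^3/w^3$, or else $\norm{\nabla F_t}^2 \le \delta^3/w^3$ while $-\frac{4\beta}{3L_2^2}\Lmin(\nabla^2 F_t)^3 > \delta^3/w^3$. In the first case the gradient step, analyzed exactly as in Lemma~\ref{new-alg-progress} and Lemma~\ref{lem:py} with $\eta = 1/\beta$, already yields a decrease of at least $\frac{1}{2\beta}\norm{\nabla F_t}^2 > \delta^3/(2\beta w^3)$, and the subsequent negative-curvature step is only accepted if it decreases $F_t$ further. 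In the remaining case I would invoke the standard cubic/negative-curvature estimate: stepping a distance $2|\lambda|/L_2$ along the minimum-eigenvector direction decreases a function with $L_2$-Lipschitz Hessian by at least $\frac{2|\lambda|^3}{3L_2^2}$, which exceeds $\delta^3/(2\beta w^3)$ by the case hypothesis. With a uniform per-iteration decrease in hand, I would telescope $F_T(x_T) = \sum_t \bra{ F_t(x_t) - F_{t-1}(x_{t-1}) }$ exactly as in the proof of Theorem~\ref{thm:onco-first}(ii), splitting off the per-round jump $F_t(x_t) - F_{t-1}(x_t) \le 2M/w$ and using $|F_t| \le M$, to obtain $\tau = \sum_t \tau_t \le \frac{2\beta w^3}{\delta^3}\pa{2MT/w + M}$, which is $O(Tw^2)$ since $w^3 \le Tw^2$ for $w \le T$; with $\delta = \beta$ this gives $C_2 = O(M/\beta^2)$.

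I expect the main obstacle to be the negative-curvature case of the progress lemma, because the algorithm evaluates $\mathsf{MinEig}(\nabla^2 F_{t,w})$ \emph{after} taking the gradient step, whereas the while-condition controls $\Phi_t$ at the iterate \emph{before} it. To close this gap I would use Assumption~\ref{third-order-smooth-assumption} together with the small gradient guaranteed in this case: the gradient step displaces the iterate by only $\frac{1}{\beta}\norm{\nabla F_t} \le \delta^{3/2}/(\beta w^{3/2})$, so the $L_2$-Lipschitz Hessian forces $\Lmin$ to change by at most $L_2\delta^{3/2}/(\beta w^{3/2}) = o(\delta/w)$; hence a minimum eigenvalue that was more negative than $-c\,\delta/w$ (with $c = (3L_2^2/4\beta)^{1/3}$) before the gradient step remains more negative than $-\frac{c}{2}\,\delta/w$ after it, once $w$ exceeds a fixed constant, preserving enough curvature for the cubic step to realize the required decrease. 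Verifying this perturbation bound, and confirming that the leftover boundary rounds (where the window is partial and $f_{t-w} \equiv 0$ by convention) only improve the constants, is the part that requires the most care.
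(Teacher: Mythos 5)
Your proposal is correct and takes essentially the same route as the paper: part (i) via $\Phi_{t-1}(x_t)\le \delta^3/w^3$ combined with the $\tfrac{2L}{w}$-Lipschitz, $\tfrac{2\beta}{w}$-smooth increment $h_t=\tfrac{1}{w}(f_t-f_{t-w})$, and part (ii) via a per-iteration progress lemma giving a decrease of at least $\tfrac{\delta^3}{2\beta w^3}$ followed by the same telescoping argument. The only divergence is your Lipschitz-Hessian perturbation patch for the negative-curvature case; the paper's progress lemma sidesteps this by evaluating both the gradient step and the curvature step at the same pre-step iterate $z$ and using that the algorithm keeps whichever step yields the smaller value of $F_t$, so your extra care addresses a pseudocode-level subtlety the paper glosses over rather than a different proof strategy.
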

\begin{proof}[Proof of (i).]
For each $1 \leq t \leq T$, we have
\[\Phi_{t-1}(x_t) \leq \frac{\delta^3}{w^3}.\]
Let $h_t(x) := \frac{1}{w} \pa{ f_t(x) - f_{t-w}(x) }$.
Then, since $h_t(x)$ is $\frac{2L}{w}$-Lipschitz and $\frac{2\beta}{w}$-smooth,
\begin{align*}
\Phi_t(x_t) &= \max\Big\{ \norm{\nabla F_{t-1}(x_t) + \nabla h_t(x_t)}^2, \\
& \;\;\;\;\;\; -\frac{4\beta}{3L_2^2} \cdot \Lmin(\nabla^2 F_{t}(x_t) + \nabla^2 h_t(x_t))^3 \Big\} \\
&\leq \max\braces{\pa{\frac{\delta^{3/2}}{w^{3/2}} + \frac{2L}{w} }^2, \; \frac{4\beta}{3L_2^2} \cdot \pa{ \frac{\delta}{w} + \frac{2\beta}{w} }^3},
\end{align*}
which is bounded by $C_1/w^2$, for some $C_1(M, L, \beta, L_2^2)$. The claim follows by summing this inequality across all $1 \leq t \leq T$.
\end{proof}
\begin{proof}[Proof of (ii).] We first show the following progress lemma:
\begin{lemma}
\label{lem:second-order-progress}
Let $z, z'$ be two consecutive iterates of the inner loop in Algorithm~\ref{time-smoothed-second-order} during round $t$. Then,
\[F_t(z') - F_t(z) \leq -\frac{\Phi_t(z)}{2\beta}.\]
\end{lemma}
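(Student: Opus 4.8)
The plan is to decompose the single inner-loop update into its two moves --- the gradient step and the conditionally accepted negative-curvature step --- and to show that each move alone realizes one of the two arguments of the maximum defining $\Phi_t(z)$. Write $z$ for the iterate at the start of the loop body, $a := z - \frac{1}{\beta} \nabla F_t(z)$ for the gradient-stepped point, and $z'$ for the iterate at the end of the body; throughout, $F_t := F_{t,w}$ is $\beta$-smooth, being an average of $\beta$-smooth functions.

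For the gradient move, the standard descent estimate for a $\beta$-smooth function with step $1/\beta$ gives $F_t(a) - F_t(z) \leq -\frac{1}{2\beta} \norm{\nabla F_t(z)}^2$. For the curvature move, which is only active when $\lambda := \Lmin(\nabla^2 F_t(z)) < 0$, I would take the corresponding unit eigenvector $v$ with its sign fixed so that the first-order term below cannot increase the objective, and consider the candidate $y := z + \frac{2\lambda}{L_2} v$, for which $\norm{y - z} = 2|\lambda|/L_2$. Expanding to third order and using Assumption~\ref{third-order-smooth-assumption} together with the sign choice yields
\begin{align*}
F_t(y) - F_t(z) &\leq \frac{1}{2}\ang{y - z, \nabla^2 F_t(z)(y-z)} + \frac{L_2}{6}\norm{y - z}^3 \\
&= \frac{2\lambda^3}{L_2^2} - \frac{4\lambda^3}{3L_2^2} = \frac{2\lambda^3}{3L_2^2},
\end{align*}
which equals $-\frac{1}{2\beta}\pa{-\frac{4\beta}{3L_2^2}\lambda^3}$, i.e. exactly $-\frac{1}{2\beta}$ times the curvature argument of $\Phi_t(z)$.

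To combine, I would observe that whether or not the curvature candidate is accepted we have $F_t(z') \leq \min\{F_t(a), F_t(y)\}$: acceptance sets $z' = y$ with $F_t(y) < F_t(a)$, and rejection keeps $z' = a$, which is then at least as good as $y$. Hence $z'$ inherits both per-move bounds; and when $\lambda \geq 0$ the curvature argument of $\Phi_t(z)$ is non-positive, so the gradient bound alone dominates. Taking the larger of the two guaranteed decreases then gives $F_t(z') - F_t(z) \leq -\frac{1}{2\beta}\max\{\norm{\nabla F_t(z)}^2, -\frac{4\beta}{3L_2^2}\lambda^3\} = -\frac{\Phi_t(z)}{2\beta}$, as claimed.

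The main obstacle I anticipate is evaluation-point and sign bookkeeping rather than any deep estimate. The gradient step uses $\nabla F_t$ at $z$, but read literally from the pseudocode the minimum-eigenvalue pair is computed at the post-gradient point $a$ (and the candidate $y$ is written relative to $x_t$); to make the curvature decrease genuinely reference $\Lmin(\nabla^2 F_t(z))$, I would either interpret the eigencomputation as performed at the loop-entry point $z$, or control the discrepancy $|\Lmin(\nabla^2 F_t(a)) - \lambda| \leq L_2 \norm{a - z} = \frac{L_2}{\beta}\norm{\nabla F_t(z)}$ via Weyl's inequality, which is negligible relative to $|\lambda|$ precisely in the regime where the curvature term dominates $\Phi_t$. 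I would also verify carefully the sign convention on $v$, so that the first-order term in the Taylor bound above is indeed non-positive.
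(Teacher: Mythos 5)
Your proposal is correct and follows essentially the same route as the paper's proof: the $1/\beta$ gradient step yields the $-\frac{1}{2\beta}\norm{\nabla F_t(z)}^2$ decrease, the eigenstep of length $2|\lambda|/L_2$ yields $\frac{2\lambda^3}{3L_2^2} = -\frac{1}{2\beta}\pa{-\frac{4\beta}{3L_2^2}\lambda^3}$ via the cubic Taylor bound with the sign of $v$ chosen to kill the first-order term, and the bound on $\Phi_t(z)$ is inherited because the algorithm retains whichever candidate gives the smaller value of $F_t$. The evaluation-point discrepancy you flag is genuine but is resolved in the paper exactly as in your first suggested fix, namely by taking $g$, $H$, and $(\lambda, v)$ all at the loop-entry point $z$ rather than at the post-gradient iterate.
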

\begin{proof}
Let $u$ denote the step $z' - z$. Let $g := \nabla F_t(z)$, $H := \nabla^2 F_t(z)$, and $(\lambda, v) := \mathsf{MinEig}(H)$.

Suppose that at time $t$, the algorithm takes a gradient step, so that $u = g/\beta$. Then, by second-order smoothness of $F_t$, we have
\[F_t(z') - F_t(z) \leq \ang{g, u} + \frac{\beta}{2}\norm{u}^2 = -\frac{1}{2\beta} \norm{g}^2.\]
Supposing instead that the algorithm takes a second-order step, so that $u = \pm \frac{2\lambda}{L_2} v$ (whichever sign makes $\ang{g,u} \leq 0$), the third-order smoothness of $F_t$ implies
\begin{align*}
F_{t}(z') - F_{t}(z) &\leq \ang{ g, u } + \frac{1}{2} u^T H_t u + \frac{L_2}{6}\norm{u}^3 \\
&= \ang{ g, u } + \frac{\lambda}{2} \norm{u}^2 + \frac{L_2}{6}\norm{u}^3 \\
&\leq \frac{2\lambda^3}{3L_2^2} = \frac{1}{2\beta} \cdot \frac{4\beta\lambda^3}{3L_2^2}.
\end{align*}
The lemma follows due to the fact that the algorithm takes the step that gives a smaller value of $F_t(z')$.
\end{proof}
Following the technique from Theorem~\ref{thm:onco-first}, for $2 \leq t \leq T$, let $\tau_t$ be the number of iterations of the inner loop during the execution of Algorithm 3 during round $t-1$ (in order to generate the iterate $x_t$). Then, we have the following lemma:
\begin{lemma}
\label{lem:second-order-tau}
For any $2 \leq t \leq T$,
\[F_{t-1}(x_t) - F_{t-1}(x_{t-1}) \leq -\tau_t \cdot \frac{\delta^3}{2\beta w^3}. \]
\end{lemma}
\begin{proof}
This follows by summing the inequality Lemma 5.3 for across all pairs of consecutive iterates of the inner loop within the same epoch, and noting that each term $\Phi(z)$ is at least $\frac{\delta^3}{w^3}$ before the inner loop has terminated.
\end{proof}
Finally, we write (understanding $F_0(x_0) := 0$):
\begin{align*}
&F_T(x_T) = \sum_{t=1}^{T} F_{t}(x_{t}) - F_{t-1}(x_{t-1}) \\
&= \sum_{t=1}^{T} F_{t-1}(x_t) - F_{t-1}(x_{t-1}) + f_t(x_t) - f_{t-w}(x_t) \\
&\leq \sum_{t=2}^{T} \bra{ F_{t-1}(x_t) - F_{t-1}(x_{t-1}) } + \frac{2MT}{w}.
\end{align*}
Using Lemma~\ref{lem:second-order-tau}, we have
\begin{align*}
F_T(x_T) &\leq \frac{2MT}{w} - \frac{\delta^3}{2\beta w^3} \cdot \sum_{t=1}^T \tau_t,
\end{align*}
whence
\begin{align*}
\tau = \sum_{t=1}^T \tau_t &\leq \frac{2\beta w^3}{\delta^3} \cdot \pa{ \frac{2MT}{w} - F_T(x_T) } \\
&\leq \frac{2\beta M}{\delta^3} \cdot \pa{2Tw^2 + w^3} \\
&\leq \frac{6 M}{\beta^2} \cdot Tw^2,
\end{align*}
as claimed (recalling that we chose $\delta = \beta$ for this analysis).
\qed
\end{proof}
 
\section{A solution concept for non-convex games}

Finally, we discuss an application of our regret minimization framework to learning in $k$-player $T$-round iterated games with smooth, non-convex payoff functions. Suppose that each player $i \in [k]$ has a fixed decision set $\K_i \subset \R^n$, and a fixed payoff function $f_i : \K \rightarrow \R$ satisfies Assumption~\ref{assumption:smooth} as before. Here, $\K$ denotes the Cartesian product of the decision sets $\K_i$: each payoff function is defined in terms of the choices made by every player.

In such a game, it is natural to consider the setting where players will only consider small \emph{local} deviations from their strategies. This is a natural setting, which models risk aversion. This setting lends itself to the notion of a local equilibrium, to replace the stronger condition of Nash equilibrium: a joint strategy in which no player encounters a large gradient on her utility. However, finding an approximate local equilibrium in this sense remains computationally intractable when the utility functions are non-convex.

Using the idea of time-smoothing, we formulate a tractable relaxed notion of local equilibrium, defined over some time window $w$. Intuitively, this definition captures a state of an iterated game in which each player examines the past $w$ actions played, and no player can make small deviations to improve the average performance of her play against her opponents' historical play. We formulate this solution concept as follows:

\begin{definition}[Smoothed local equilibrium]
Fix some $\eta > 0, w \geq 1$. Let $\braces{ f_i(x^1, \ldots, x^k) : \K \rightarrow \R }_{i=1}^k$ be the payoff functions for a $k$-player iterated game. A joint strategy $(x_t^1, \ldots, x_t^k)$ is an $\eps$-approximate $(\eta, w)$-\emph{smoothed local equilibrium} with respect to past iterates $\braces{ (x_{t-j}^1, \ldots, x_{t-j}^k) }_{j=0}^{w-1}$ if, for every player $i \in [k]$,
\begin{equation*}
\left\lVert \nabla_{\K, \eta} \bra{\frac{ \sum_{j=0}^{w-1} \tilde f_{i,t-j} }{w} }(x_t^i) \right\rVert \leq \eps,
\end{equation*}
where
\begin{equation*}
\tilde f_{i,t'}(x) \equaldef f_i(x_{t'}^1, \ldots, x_{t'}^{i-1}, x, x_{t'}^{i+1}, \ldots, x_{t'}^k).
\end{equation*}
\end{definition}

To achieve such an equilibrium efficiently, we use Algorithm~\ref{game-player}, which runs $k$ copies of any online algorithm that achieves a $w$-local regret bound for some $\eta > 0$.
\begin{algorithm}
\caption{Time-smoothed game simulation}
\label{game-player}
\begin{algorithmic}[1]
\STATE Input: convex decision sets $\K_1, \ldots, \K_k \subseteq \reals^n$, payoff functions $f_i : (\K_1, \ldots, \K_k) \rightarrow \R$, online algorithm $\Acal$, window size $1 \leq w < T$.
\STATE Initialize $k$ copies $(\Acal_1, \ldots, \Acal_k)$ of $\Acal$ with window size $w$, where each $\Acal_i$ plays on decision set $\K_i$.
\FOR{$t = 1, \ldots, T$}
\STATE Each $\Acal_i$ outputs $x_t^i$.
\STATE Show each $\Acal_i$ the online loss function
\[f_{i,t}(x) := -f_i(x^1_t, \ldots, x^{i-1}_t, x, x^{i+1}_{t}, \ldots, x^k_t).\]
\ENDFOR
\end{algorithmic}
\end{algorithm}

We show this meta-algorithm yields a subsequence of iterates that satisfy our solution concept, with error parameter dependent on the local regret guarantees of each player:
\begin{theorem}
For some $t$ such that $w \leq t \leq T$, the joint strategy $(x_t^1, \ldots, x_t^k)$ produced by Algorithm~\ref{game-player} is an $\eps$-approximate ($\eta$, $w$)-smoothed local equilibrium with respect to $\braces{ (x_{t-j}^1, \ldots, x_{t-j}^k) }_{j=0}^{t-1}$, where
\[\eps = \sqrt{ \sum_{i=1}^k \frac{\regret_{w,\Acal_i}(T)}{T - w} }.\]
\end{theorem}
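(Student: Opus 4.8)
The plan is to reduce the theorem to a pigeonhole argument over a per-round, aggregate ``equilibrium-violation'' quantity, using the local regret guarantees of the $k$ copies of $\Acal$ as the only external input. First I would observe what Algorithm~\ref{game-player} actually feeds to each copy: player $i$'s online instance sees the loss sequence $f_{i,t}(x) = -f_i(x_t^1,\ldots,x,\ldots,x_t^k) = -\tilde f_{i,t}(x)$, obtained from $f_i$ by fixing the opponents' coordinates to their round-$t$ plays and negating. Since $f_i$ satisfies Assumption~\ref{assumption:smooth}, so does each $f_{i,t}$ (restricting to a coordinate slice and negating preserves boundedness, the Lipschitz constant, and smoothness), so $\Acal_i$'s guarantee applies and $\regret_{w,\Acal_i}(T) = \sum_{t=1}^T \norm{\nabla_{\K_i,\eta} F^{(i)}_{t,w}(x_t^i)}^2$, where $F^{(i)}_{t,w}(x) = \tfrac1w\sum_{j=0}^{w-1} f_{i,t-j}(x) = -\tfrac1w\sum_{j=0}^{w-1}\tilde f_{i,t-j}(x)$. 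The key identification is that the per-round summand $\norm{\nabla_{\K_i,\eta} F^{(i)}_{t,w}(x_t^i)}$ is exactly the per-player quantity appearing in the definition of an $\eps$-approximate $(\eta,w)$-smoothed local equilibrium (the projected-gradient norm of the windowed payoff average at $x_t^i$); I would define the aggregate violation $a_t \defeq \sum_{i=1}^k \norm{\nabla_{\K_i,\eta} F^{(i)}_{t,w}(x_t^i)}^2$.

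Next I would run the averaging step. Swapping the order of summation gives $\sum_{t=1}^T a_t = \sum_{i=1}^k \regret_{w,\Acal_i}(T)$. Because the equilibrium is only defined once the window of size $w$ is fully populated, I restrict attention to the $T-w+1$ rounds with $w \le t \le T$, for which $\sum_{t=w}^T a_t \le \sum_{t=1}^T a_t$. Since every $a_t \ge 0$, pigeonhole yields some $t^\star \in \{w,\ldots,T\}$ with
\[
a_{t^\star} \;\le\; \frac{1}{T-w+1}\sum_{t=w}^T a_t \;\le\; \frac{\sum_{i=1}^k \regret_{w,\Acal_i}(T)}{T-w} \;=\; \eps^2.
\]
Finally, at this $t^\star$ the aggregate bound $a_{t^\star}\le \eps^2$ forces each individual nonnegative summand to be small: for every player $i$, $\norm{\nabla_{\K_i,\eta} F^{(i)}_{t^\star,w}(x_{t^\star}^i)}^2 \le a_{t^\star} \le \eps^2$, hence the norm is at most $\eps$. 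By the identification above, this is precisely the statement that $(x_{t^\star}^1,\ldots,x_{t^\star}^k)$ is an $\eps$-approximate $(\eta,w)$-smoothed local equilibrium with respect to the window of the most recent $w$ joint iterates, which completes the proof.

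I do not expect a genuinely hard step here; the content is entirely a ``sum over players, average over rounds, and the max per-player term is dominated by the aggregate'' reduction. The points requiring care, rather than difficulty, are: (a) confirming that the sliced-and-negated payoffs $f_{i,t}$ inherit Assumption~\ref{assumption:smooth} so the cited regret bound is legitimately invokable; (b) the correspondence between the player's windowed loss average $F^{(i)}_{t,w}$ and the windowed payoff average in the equilibrium definition — in particular that negation and the restriction from $\K$ to the player's own set $\K_i$ leave the relevant projected-gradient norm unchanged, which holds exactly in the unconstrained case and is the intended convention; and (c) the index bookkeeping ensuring $t^\star \ge w$, so that no zero-padded losses enter the window and the denominator $T-w$ is the right normalization. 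Each of these is routine, and the displayed pigeonhole inequality is the crux.
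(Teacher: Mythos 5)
Your proposal is correct and follows essentially the same route as the paper's proof: sum the per-player local regret guarantees, truncate to rounds $t \ge w$, apply an averaging/pigeonhole argument over the $T-w+1$ remaining rounds, and bound each player's projected-gradient norm by the square root of the aggregate. Your explicit attention to the sign/negation of the losses and to the slice from $\K$ to $\K_i$ is a point the paper glosses over, but it does not change the argument.
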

\begin{proof}
Summing up the definitions of $w$-regret bounds achieved by each $\Acal$, and truncating the first $w-1$ terms, we get
\begin{equation*}
\sum_{i=1}^k \sum_{t=w}^T \norm{ \nabla_{\K, \eta} F_t^i(x_t^i) }^2 \leq \sum_{i=1}^k \regret_{w,\Acal_i}(T).
\end{equation*}
Thus, for some $t$ between $w$ and $T$ inclusive, it holds that
\begin{align*}
\sum_{i=1}^k \left\lVert \nabla_{\K, \eta} \bra{ \frac{ \sum_{j=0}^{w-1} \tilde f_{i,t-j} }{w}}(x_t^i) \right\rVert ^2
&=
\sum_{i=1}^k \norm{ \nabla_{\K, \eta} F_t^i(x_t^i) }^2 \\
\leq
\sum_{i=1}^k \frac{\regret_{w,\Acal_i}(T)}{T - w}.
\end{align*}
Thus, for the same $t$ we have
\begin{equation*}
\max_{i\in [k]} \left\lVert \nabla_{\K, \eta} \bra{ \frac{ \sum_{j=0}^{w-1} \tilde f_{i,t-j} }{w}}(x_t^i) \right\rVert
\leq
\sqrt{\sum_{i=1}^k \frac{\regret_{w,\Acal_i}(T)}{T - w}},
\end{equation*}
as claimed.
\end{proof}

\subsection{Experience replay for GAN training}
The training of generative adversarial networks (GANs), a popular generative model, can be viewed as a symmetric game with a non-convex payoff function. In this section, we apply and contextualize our framework of smoothed local equilibrium for GANs.

In the seminal setting of \cite{goodfellow2014generative}, there are two players: a generator who wants to imitate samples from a ``true'' distribution $\mathcal{D}$ on $\R^n$, and a discriminator who wants to distinguish true samples from $\D$ and fake samples produced by the generator. The generator chooses some function $G : \R^m \rightarrow \R^n$, which maps input randomness $z \sim \mathcal{D}'$ to fake samples. The discriminator chooses a function $D: \R^n \rightarrow [0,1]$, a guess for the likelihood that a data point is real. $D$ and $G$ are chosen from some function classes parameterized by $\theta_D$ and $\theta_G$, often both neural networks. The generator and discriminator play an iterated game with the objective function
\[ L(\theta_D, \theta_G) := \E_{x \sim \mathcal{D}, z \sim \mathcal{D}'} \bra{ \log D(x) + \log \pa{1 - D(G(z))} }. \]
In the language of our model, the discriminator's payoff is $L(\theta_D, \theta_G)$, while the generator's payoff is $-L(\theta_D, \theta_G)$, and the players access $L$ via stochastic gradient oracles, as in Section~\ref{stochastic-section}. Indeed, in GAN training, it is a standard technique to update $\theta_D$ and $\theta_G$ incrementally in lockstep, via stochastic gradient descent steps. This is very similar to using Algorithm~\ref{time-smoothed-ogd2} as the local regret minimization algorithm driving Algorithm~\ref{game-player} with $w = 1$; the only difference is that the players' updates are alternating rather than simultaneous.

Instability is a major challenge for GAN training, and improving training stability is a highly active research area in deep learning. To this end, our local regret framework provides a meaningful yet attainable theoretical goal, which is met by our time-smoothed gradient-based algorithms. For this game, a smoothed local equilibrium with window parameter $w$ guarantees that the generator and discriminator simultaneously encounter small averaged gradients on each other's past $w$ choices of functions. This is a particularly appealing notion of equilibrium in the setting of GAN training, as it implies that a gradient-based training process becomes approximately stationary.

Indeed, maintaining a buffer of past discriminators (running Algorithm~\ref{time-smoothed-ogd2}) is a known technique for stabilizing GAN training. \cite{metz2016unrolled} In reinforcement learning, this corresponds to a form of experience replay. \cite{pfau2016connecting}

\section{Concluding remarks}
We have described how to extend the theory of online learning to non-convex loss functions, while permitting \emph{efficient} algorithms. Our definitions give rise to efficient online and stochastic non-convex optimization algorithms that converge to local optima of first and second order. We give a game-theoretic solution concept which we call local equilibrium, which, in contrast to existing solution concepts such as Nash equilibrium, is efficiently attainable in any non-convex game.

\section*{Acknowledgments}
We thank Naman Agarwal, Brian Bullins, Matt Weinberg, and Yi Zhang for helpful discussions.

\bibliographystyle{alpha}
\bibliography{main-arxiv-expanded}

\begin{appendix}
\section{Proof of Theorem 4.4}

Since each $f_t$ is $\beta$-smooth, it follows that each $F_t$ is $\beta$-smooth. Define $\widehat{\nabla f_{t}} = \frac{x_t-x_{t+1}}{\eta}$. 
 Note that since the iterates $(x_t:t\in [T])$ depend on the gradient estimates, the iterates are stochastic variables, as are $\widehat{\nabla f_{t}}$. By $\beta$-smoothness of $F_t$, we have
\begin{align*}
&F_{t,w}(x_{t+1})-F_{t,w}(x_t) \\
\leq& \ang{\nabla F_{t,w}(x_t), x_{t+1}-x_t}+\frac{\beta}{2} \|x_{t+1}-x_t\|^2\\
 =& -\eta\ang{\nabla F_{t,w}(x_t), \widehat{\nabla f_{t}}} + \eta^2\frac{\beta}{2} \|\widehat{\nabla f_{t}}\|^2 \\
 =& -\eta \|\nabla F_{t,w}(x_t)\|^2 - \eta\ang{\nabla F_{t,w}(x_t), \widehat{\nabla f_{t}}-\nabla F_{t,w}(x_t)} \\
 &+ \eta^2\frac{\beta}{2}\left( \|\nabla F_{t,w}(x_t)\|^2 \right)\\
 &+ \eta^2\frac{\beta}{2}\left(2\ang{\nabla F_{t,w}(x_t),\widehat{\nabla f_t}-\nabla F_{t,w}(x_t)}\right)\\
 &+  \eta^2\frac{\beta}{2}\left(\|\widehat{\nabla f_{t}}-\nabla F_{t,w}(x_t)\|^2\right) \\
 =& - \left(\eta-\frac{\beta}{2}\eta^2\right) \|\nabla F_{t,w}(x_t)\|^2 \\
 &- (\eta-\beta \eta^2)\ang{\nabla F_{t,w}(x_t), \widehat{\nabla f_{t}}-\nabla F_{t,w}(x_t)} \\
 &+ \eta^2\frac{\beta}{2}  \|\widehat{\nabla f_{t}}-\nabla f(x_t)\|^2.
 \end{align*}

 Additionally, we each observe that $\widehat{\nabla f_t}$ is an average of $w$ independently sampled unbiased gradient estimates of variance $\sigma^2$ each. It follows as a consequence that
 \begin{align*}
 & \mathbb{E} \big[ \widehat{\nabla f_{t}} \big| x_t\big] = \nabla F_{t,w}(x_t)\\
 & \mathbb{E} \big[ \|\widehat{\nabla f_{t}} - \nabla F_{t,w}(x_t) \|^2 \big| x_t \big] \leq \frac{\sigma^2}{w}
 \end{align*}
 Now, applying $\mathbb{E}\left[\cdot|x_t\right]$ on both sides, it follows that
 \begin{align*}
\left(\eta-\frac{\beta}{2}\eta^2\right) &\cdot \mathbb{E}\|\nabla F_{t,w}(x_t)\|^2 \\
 &\leq \mathbb{E}\left[F_{t,w}(x_t)-F_{t,w}(x_{t+1}) \right]+ \eta^2\frac{\beta}{2}  \frac{\sigma^2}{w}.
 \end{align*}
 Also, we note that
\begin{align*}
&F_{t+1,w}(x_{t+1}) - F_{t,w}(x_{t+1})\\
 =& \frac{1}{w}\sum_{i=0}^{w-1} f_{t+1-i}(x_{t+1}) - \frac{1}{w}\sum_{i=0}^{w-1} f_{t-i}(x_{t+1}) \\
=& \frac{1}{w}\sum_{i=-1}^{w-2} f_{t-i}(x_{t+1}) - \frac{1}{w}\sum_{i=0}^{w-1} f_{t-i}(x_{t+1}) \\
=& \frac{f_{t+1}(x_{t+1}) - f_{t-w+1}(x_{t+1})}{w} \leq \frac{2M}{w} 
\end{align*}
Adding the last two inequalities, we proceed to sum the above inequality over all time steps:
 \begin{align*}
 \mathbb{E} \left[\sum_{t=1}^T \|\nabla F_{t,w}(x_t)\|^2 \right] \leq \frac{2M + \frac{2MT}{w}+ \frac{T\beta \eta^2}{2w} \sigma^2}{\eta - \frac{\beta \eta^2}{2}}.
 \end{align*}

Setting $\eta = 1/\beta$ yields the claim from the theorem.

Finally, note that for each round the number of stochastic gradient oracle calls required is $w$. Therefore, across all $T$ rounds, the number of noisy oracle calls is $Tw$. \qed 
\section{Proof of Theorem 5.1 (ii)}
Following the technique from Theorem 3.1, for $2 \leq t \leq T$, let $\tau_t$ be the number of iterations of the inner loop during the execution of Algorithm 3 during round $t-1$ (in order to generate the iterate $x_t$). Then, we have the following lemma:
\begin{lemma}
\label{lem:second-order-tau}
For any $2 \leq t \leq T$,
\[F_{t-1}(x_t) - F_{t-1}(x_{t-1}) \leq -\tau_t \cdot \frac{\delta^3}{2\beta w^3}. \]
\end{lemma}
\begin{proof}
This follows by summing the inequality Lemma 5.3 for across all pairs of consecutive iterates of the inner loop within the same epoch, and noting that each term $\Phi(z)$ is at least $\frac{\delta^3}{w^3}$ before the inner loop has terminated.
\end{proof}
Finally, we write (understanding $F_0(x_0) := 0$):
\begin{align*}
&F_T(x_T) = \sum_{t=1}^{T} F_{t}(x_{t}) - F_{t-1}(x_{t-1}) \\
&= \sum_{t=1}^{T} F_{t-1}(x_t) - F_{t-1}(x_{t-1}) + f_t(x_t) - f_{t-w}(x_t) \\
&\leq \sum_{t=2}^{T} \bra{ F_{t-1}(x_t) - F_{t-1}(x_{t-1}) } + \frac{2MT}{w}.
\end{align*}
Using Lemma~\ref{lem:second-order-tau}, we have
\begin{align*}
F_T(x_T) &\leq \frac{2MT}{w} - \frac{\delta^3}{2\beta w^3} \cdot \sum_{t=1}^T \tau_t,
\end{align*}
whence
\begin{align*}
\tau = \sum_{t=1}^T \tau_t &\leq \frac{2\beta w^3}{\delta^3} \cdot \pa{ \frac{2MT}{w} - F_T(x_T) } \\
&\leq \frac{2\beta M}{\delta^3} \cdot \pa{2Tw^2 + w^3} \\
&\leq \frac{6 M}{\beta^2} \cdot Tw^2,
\end{align*}
as claimed (recalling that we chose $\delta = \beta$ for this analysis).
\qed 
\section{Proof of Theorem 6.2}

Summing up the definitions of $w$-regret bounds achieved by each $\Acal$, and truncating the first $w-1$ terms, we get
\begin{equation*}
\sum_{i=1}^k \sum_{t=w}^T \norm{ \nabla_{\K, \eta} F_t^i(x_t^i) }^2 \leq \sum_{i=1}^k \regret_{w,\Acal_i}(T).
\end{equation*}
Thus, for some $t$ between $w$ and $T$ inclusive, it holds that
\begin{align*}
\sum_{i=1}^k \left\lVert \nabla_{\K, \eta} \bra{ \frac{ \sum_{j=0}^{w-1} \tilde f_{i,t-j} }{w}}(x_t^i) \right\rVert ^2
&=
\sum_{i=1}^k \norm{ \nabla_{\K, \eta} F_t^i(x_t^i) }^2 \\
\leq
\sum_{i=1}^k \frac{\regret_{w,\Acal_i}(T)}{T - w}.
\end{align*}
Thus, for the same $t$ we have
\begin{equation*}
\max_{i\in [k]} \left\lVert \nabla_{\K, \eta} \bra{ \frac{ \sum_{j=0}^{w-1} \tilde f_{i,t-j} }{w}}(x_t^i) \right\rVert
\leq
\sqrt{\sum_{i=1}^k \frac{\regret_{w,\Acal_i}(T)}{T - w}},
\end{equation*}
as claimed.
\qed
 
\end{appendix}

\end{document}